\title{Fully General Online Imitation Learning}
\author{\name Michael K.\ Cohen \email michael.cohen@eng.ox.ac.uk \\
       \addr Department of Engineering Science\\
       University of Oxford\\
       Future of Humanity Institute\\
       Oxford, UK OX1 3PJ
       \AND
       \name Marcus Hutter \email hutter1.net \\
       \addr DeepMind\\
       Department of Computer Science\\
       Australian National University\\
       Acton, ACT, Australia 2601
       \AND
       \name Neel Nanda \email neelnanda.io \\
       \addr Independent}
\DeclareMathOperator*{\argmax}{argmax}
\DeclareMathOperator*{\KL}{KL}
\DeclareMathOperator{\p}{P}
\DeclareMathOperator{\E}{\mathbb{E}}
\DeclareMathOperator*{\lequal}{\leq}
\DeclareMathOperator*{\gequal}{\geq}
\DeclareMathOperator*{\equal}{=}
\DeclareMathOperator*{\va}{\!\bigm\vert\!}
\DeclareMathOperator*{\vb}{\!\Bigm\vert\!}
\DeclareMathOperator*{\vc}{\!\biggm\vert\!}
\DeclareMathOperator*{\vd}{\!\Biggm\vert\!}
\DeclareMathOperator{\A}{\mathcal{A}}
\DeclareMathOperator{\Ob}{\mathcal{O}}
\DeclareMathOperator{\M}{\mathcal{M}}
\DeclareMathOperator{\ptrue}{P^{\pi^\mathnormal{i}_\alpha}_\mu}
\DeclareMathOperator{\rhosn}{\rho^{stat}_{\mathnormal{n}}}
\DeclareMathOperator{\rhosnm}{\rho^{stat}_{\mathnormal{n -- }1}}
\DeclareMathOperator{\rhoso}{\rho^{stat}_{1}}
\DeclareMathOperator{\rhonn}{\rho^{norm}_{\mathnormal{n}}}
\DeclareMathOperator{\Etrue}{\mathbb{E}^{\pi^\mathnormal{i}_\alpha}_\mu}
\DeclareMathOperator{\pa}{\Pi^\alpha_{\mathnormal{h_{<t}}}}
\DeclareMathOperator*{\bigtimes}{\times}
\newcommand{\tagaligneq}{\refstepcounter{equation}\tag{\theequation}}
\newtheorem{assumption}{Assumption}
\def\partialbox{
    \tikz\draw[path picture={\fill[black] (path picture bounding box.north east)
  -- (path picture bounding box.south west) |-cycle;}] (0,0) rectangle  ++ (0.25,0.25);
}
\newcommand{\proofoutlinename}{Proof -- Detailed Outline}
\def\dottedbox{\tikz\node[draw=black,dotted] {\phantom{}};}
\newenvironment{proofidea}[1][\proofideaname]{\par
  \normalfont
  \topsep6\p@\@plus6\p@ \trivlist
    \item[\,\textbf{
    #1}]
}{%
  \hfill$\dottedbox$ \endtrivlist
}
\newcommand{\proofideaname}{Proof idea}
\patchcmd{\NAT@test}{\else \NAT@nm}{\else \NAT@nmfmt{\NAT@nm}}{}{}
\DeclareRobustCommand\citepos
  \let\NAT@nmfmt\NAT@posfmt
\let\NAT@ctype\z@\NAT@partrue
\let\NAT@orig@nmfmt\NAT@nmfmt
\def\NAT@posfmt#1{\NAT@orig@nmfmt{#1's}}
\begin{document}

\allowdisplaybreaks

\maketitle

\begin{abstract}%
In imitation learning, imitators and demonstrators are policies for picking actions given past interactions with the environment. If we run an imitator, we probably want events to unfold similarly to the way they would have if the demonstrator had been acting the whole time. In general, one mistake during learning can lead to completely different events. In the special setting of environments that restart, existing work provides formal guidance in how to imitate so that events unfold similarly, but outside that setting, no formal guidance exists.
We address a fully general setting, in which the (stochastic) environment and demonstrator never reset, not even for training purposes, and we allow our imitator to learn online from the demonstrator. Our new conservative Bayesian imitation learner underestimates the probabilities of each available action, and queries for more data with the remaining probability. Our main result: if an event would have been unlikely had the demonstrator acted the whole time, that event's likelihood can be bounded above when running the (initially totally ignorant) imitator instead. Meanwhile, queries to the demonstrator rapidly diminish in frequency. If any such event qualifies as ``dangerous'', our imitator would have the notable distinction of being relatively ``safe''.
\end{abstract}

\begin{keywords}
Bayesian Sequence Prediction, Imitation Learning, Active Learning, General Environments
\end{keywords}

\section{Introduction}

Supervised learning of independent and identically distributed data is often practiced in two phases: training and deployment. This separation makes less sense if the learner's predictions affect the distribution of future contexts for prediction, since the deployment phase could lose all resemblance to the training phase. When a program's output changes its future percepts, we often call its output ``actions''. Supervised learning in that regime is commonly called ``imitation learning'', where labels are the actions of a ``demonstrator'' \citep{syed2010reduction}. Our agent, acting in a general environment that responds to its actions, tries to pick actions according to the same distribution as a demonstrator.

Even in imitation learning, where it is understood that actions can change the distribution of contexts that the agent will face, it is common to separate a training phase from a deployment phase. This assumes away the possibility that the distribution of contexts will shift significantly upon deployment and render the training data increasingly irrelevant. Here, we present an online imitation learner that is robust to this possibility.

The obvious downside is that the training never ends. The agent can always make queries for more data, but importantly, it does this with diminishing probability. It transitions smoothly from a mostly-training phase to a mostly-deployed phase. Our agent also handles totally general stochastic environments (environments serve new contexts for the agent to act in) and totally general stochastic demonstrator policies. No finite-state-Markov-style stationarity assumption is required for either. The lack of assumptions about the environment is a mundane point, because imitation learners don't have to learn the dynamics of the environment, but the lack of assumptions on the prediction target---the demonstrator's policy---makes these results highly non-trivial. The only assumption is that the demonstrator's policy belongs to some known countable class of possibilities. Moreover, stochasticity makes single-elimination-style learning \citep{gold1967language} impossible.

For demonstrator policies this general, we present formal results that are unthinkable in the train-then-deploy paradigm. The $\ell_1$ distance between the imitator and demonstrator policies converges to 0 in mean cube, when conditioned on a high-probability event (Theorem \ref{thm:imtodem}). And Theorem \ref{thm:truthintop} shows that the event has high probability. Conditioned on the same high-probability event, we bound the KL divergence from imitator to demonstrator (Theorem \ref{thm:kl}), and we upper bound the probability of an arbitrary event under the imitator's policy, given a low probability of occurrence under the demonstrator's policy (Theorem \ref{thm:badevent}).
Instead of having a finite training phase, our agent's query probability converges to 0 in mean cube (Theorem \ref{thm:queryprob}). Without Theorems \ref{thm:queryprob} and \ref{thm:truthintop}, the remaining theorems would be uninteresting; they would be easily fulfilled by an imitator that always queried the demonstrator, or they would apply only rarely.

Our imitator maintains a posterior over demonstrator models. At each timestep, it takes the top few demonstrator models in the posterior, in a way that depends on a scalar parameter $\alpha$. Then, for each action, it considers the minimum over those models of the probability that the demonstrator picks that action. The imitator samples an action according to those probabilities, and if no action is sampled (since model disagreement makes the probabilities to sum to less than 1), it defers to the demonstrator.

We review theoretical developments in imitation learning in Section \ref{sec:related}, define our formal setting in Section \ref{sec:prelim}, define our imitation learner in Section \ref{sec:imitation}, and illustrate it with a toy example in Section \ref{sec:example}. We state key formal results in Section \ref{sec:results}, and we outline our proof technique and introduce necessary notation in Section \ref{sec:proofnotation}. Section \ref{sec:resultsseq} presents lemmas and intermediate results, and Section \ref{sec:keyproofs} presents proofs and proof ideas of our key results, but most of the proofs appear in Appendix \ref{sec:proofs}. Appendix \ref{sec:notationtable} collects notation and definitions.

\section{Related Work} \label{sec:related}
Recall that a key difficulty of imitation learning over supervised learning is the removal of a standard i.i.d.\ assumption. However, all existing formal work in imitation learning studies repeated finite episodes of length $T$; even though the dynamics are not i.i.d.\ from timestep to timestep within an episode, the agent learns from a sequence of episodes that are, as a whole, independent and identically distributed. Thus, the scope of existing formal work is limited to environments that ``restart''. A driving agent that gets housed in a new car every time it crashes (or gets hopelessly lost) enjoys a ``restarting'' environment, whereas a driving agent with only one car to burn does not. If we can accurately simulate a non-restarting environment, then training the imitator in simulation (using existing formal methods) could indeed prepare it to act in a non-restarting one. The viability of this approach depends on the environment; for many, we simply cannot simulate them with enough accuracy. For example, consider imitating a sales rep at a software company, interfacing with potential clients over email. For a real potential client, a relationship cannot be rebooted, and no simulation could anticipate the many diverse needs of clients.

In the context of restarting environments, \citet{syed2010reduction} reduce the problem of predicting a demonstrator's behavior to i.i.d.\ classification. The only assumption about the demonstrator is that the value of its policy as a function of state is arbitrarily well approximated by the value of a deterministic policy, which is only slightly weaker than assuming the demonstrator is deterministic itself. They make no assumptions about the environment, other than that we can access identical copies of it repeatedly. They show that if a classifier guessing the demonstrator's actions has an error rate of $\varepsilon$, then the value of the imitator's policy that uses the classifier is within $O(\sqrt{\varepsilon})$ of the demonstrator.

\citet{judah2014active} improve the label complexity of \citepos{syed2010reduction} reduction by actively deciding when to query the demonstrator, instead of simply observing $N$ full episodes before acting. Making the same assumptions as that paper, and also assuming a realizable hypothesis class with a finite VC dimension, they attempt to reduce the number of queries before the agent can act for a whole episode on its own with an error rate less than $\varepsilon$. Letting $T$ be the length of an episode, compared to \citepos{syed2010reduction} $O(T^3/\varepsilon)$ labels, they achieve $O(T \log(T^3 / \varepsilon))$.

\citet{ross2010efficient} also reduce the problem to classification. In a trivial reduction, the imitator observes the demonstrator act from the distribution of states induced by the demonstrator policy. In this reduction, if the classifier has an error rate of $\varepsilon$ per action on the demonstrator's state distribution, the error rate of the imitator on its own distribution is at most $T^2 \varepsilon$, where $T$ is again the length of the episode. Their main contribution is to introduce a cleverer training regime for the classifier to reduce this bound to $T \varepsilon$ in environments with approximate recoverability.

\citet{ross2011reduction} reduce imitation learning to something else: a no-regret online learner, for which the average error rate over its lifetime approaches 0, even with a potentially changing loss function. With access to an online learner with average regret $O(1/N_{\textrm{predictions}})$, they construct an imitation learner with regret of the same order. Unlike \citet{syed2010reduction} and \citet{judah2014active}, they make no assumption that the demonstrator is arbitrarily well-approximated by a deterministic policy. Unlike \citet{judah2014active}, they do not assume a realizable hypothesis class with a finite VC dimension. And unlike  the \citet{ross2010efficient} (for their main contribution), they do not assume approximate recoverability. They do still assume that we can repeatedly access identical copies of the environment, and the loss function used for their measurement of regret must be bounded. To achieve a regret of order $O(1/N_{\textrm{predictions}})$ with probability at least $1-\delta$, they require $O(T^2 \log (1/\delta))$ observations of the demonstrator.

There is a great deal of empirical study of imitation learning, given the practical applications, which \citet{hussein2017imitation} review. We call a few specific experiments to the reader's attention, since they resemble our work in taking an active approach to querying, with an eye to risk aversion, not just label efficiency; they find it works. First, \citet{brown2018risk,brown2020bayesian} consider a context where the imitator can, at any time, ask the demonstrator how it would act in any of finitely many states. These imitators focus on states that they assign higher value at risk. Those papers and the following all show strong label efficiency alongside limited loss. \citet{zhang2017query} assume some method of predicting the error of an imitator in the process of learning, and they query for help when it is above some threshold. Otherwise, their imitator follows \citepos{ross2011reduction} construction. In their paper, the function that predicts the imitator's error is learned from hand-picked features of a dataset. \citet{menda2019ensembledagger} query much more extensively, but like \citet{zhang2017query}, they don't always act on the demonstrator's suggestion, in order to sample a more diverse set of states. Unlike \citet{zhang2017query}, they \textit{do} act on it when the imitator's action deviates enough from the demonstrator's (given some hand-designed distance metric over the action space). They also defer to the demonstrator when there is sufficient disagreement among an ensemble of imitators. They find their imitator is more robust. \citet{hoque2021lazydagger} note that in many contexts, it is more convenient for the demonstrator to be queried a few times successively, rather than spread out over a long time. They modify \citepos{zhang2017query} approach: the imitator starts querying when the estimated error exceeds the same threshold, but it continues querying until it returns below a lower threshold. At the cost of more total queries, it requires fewer query-periods. Like the formal work, all these experiments regard environments that restart.

Adjacent to pure imitation learning (trying to pick the same actions as a demonstrator would), there is also work on trying to act in pursuit of the same goals as a demonstrator (which must be inferred), or matching only some outcomes of the demonstrator policy, like the expectation of some given set of features. For a review of some work in this area, see \citet{adams2022survey}.

\section{Preliminaries} \label{sec:prelim}

Let $a_t \in \A$ and $o_t \in \Ob$ be the action and observation at timestep $t \in \mathbb{N}$. Let $q_t \in \{0, 1\}$ denote whether the imitator ($q_t=0$) or demonstrator ($q_t=1$) selects $a_t$. Let $\mathcal{H} = \{0, 1\} \times \A \times \Ob$, and let $h_t = (q_t, a_t, o_t) \in \mathcal{H}$. Let $h_{<t} = (h_0, h_1, ..., h_{t-1})$. $\mathcal{X}^n = \bigtimes_{i = 1}^n \mathcal{X}$ denotes the set of $n$-tuples of elements of $\mathcal{X}$, and $\mathcal{X}^* = \bigcup_{n = 0}^\infty \mathcal{X}^n$ is the Kleene-star operator, which denotes all tuples of elements of $\mathcal{X}$.

Let $\pi : \mathcal{H}^* \rightsquigarrow \{0, 1\} \times \A$, and $\rightsquigarrow$ denotes that $\pi$ gives a distribution over $\{0, 1\} \times \A$. $\epsilon$ will denote the empty string; it is the element of $\mathcal{H}^0$. $\pi$ is called a policy, and will typically be written $\pi(q_t a_t \mid h_{<t})$. $\pi(a_t \mid h_{<t})$ denotes the marginal distribution over the action. Let $\mu : \mathcal{H}^* \times \{0, 1\} \times \A \rightsquigarrow \mathcal{O}$. $\mu$ is called the environment, and will typically be written $\mu(o_t \mid h_{<t} q_t a_t)$. Note from this construction that an environment and a policy may qualitatively change over time---instead of being stationary with respect to the latest timestep, they can depend on the whole history.

Much formal work in imitation learning and reinforcement learning involves defining environments in terms of their Markov states and how one transitions through them. The defining property of a state is that that future is independent of the past conditioned on the state. For those more comfortable in that framework, our state space here is $\mathcal{H}^*$, so the Markov property trivial: the state is the whole history, so indeed, the future is independent of the history, when conditioned on the history. The point of the Markov Decision Process formalism is that when the state space is finite (or compact, with relevant functions of it being continuous), more tractable inference algorithms become available, but we do not assume finiteness or any structure in the state space. For finite histories denoted $h_{<t}$, the reader could mentally substitute $s_t$, this being the state at time $t$, but the infinite history $h_{<\infty}$, which appears in some proofs, has no standard notational analog.

Speaking of which, let $\mathcal{H}^\infty$ be the set of infinite strings of elements of $\mathcal{H}$. Let $\p^\pi_\mu$ be the probability measure over $\mathcal{H}^\infty$ where query records and actions are sampled from $\pi$, and observations are sampled from $\mu$. The event space is the standard sigma algebra over cylinder sets $\sigma(\{\{h_{<t}h_{t:\infty} : h_{t:\infty} \in \mathcal{H}^\infty\} : h_{<t} \in \mathcal{H}^*\})$. In a stochastic process, a cylinder set is the set of all possible futures given a particular past.

Let $\Pi$ be a finite or countable set of policies, and for $\pi \in \Pi$, let $w(\pi) > 0$ be a prior weight assigned to $\pi$, such that $\sum_{\pi \in \Pi} w(\pi) = 1$. This represents the imitator's initial belief distribution over the demonstrator's policy. For convenience, let $\Pi$ only contain policies which assign zero probability to $q_t = 0$, since demonstrator models may as well be convinced that the demonstrator is picking the action.
\begin{example}[(Linear-Time) Computable Policies] \label{ex:policies}
The requirement that $\Pi$ be countable is not restrictive in theory. Suppose $\Pi$ is the set of programs that compute a policy (in linear time). These can be easily enumerated, and the prior $w$ can be set $\propto 2^{-\textrm{program length}}$ \citep{kraft1949device,Hutter:04uaibook}.
\end{example}
Given the near absence of constraints, the choice of model class might pique philosophical interest. There are multiple logics with differing powers that we could plausibly use to represent programs, including ``programs'' higher in the arithmetic hierarchy. In general, the choice of programming language would change programs' relative length, and there are no clear desiderata when choosing a language. So Example \ref{ex:policies} does not appear to offer an approach to solving the Problem of Priors \citep{talbottepistemologybayesian}. The option to restrict to linear-time programs is a marginally more practical possibility that might escape most philosophical discussions.

\section{Imitation} \label{sec:imitation}

Let $w(\pi \mid h_{<t})$ be the posterior weight after observing $h_{<t}$ that demonstrator-chosen actions were sampled from $\pi$. That is,

\begin{equation} \label{eqn:post}
    w(\pi \mid h_{<t}) : \propto w(\pi) \prod_{k < t : q_k = 1} \pi(q_k a_k \mid h_{<k})
\end{equation}
normalized such that $\sum_{\pi \in \Pi} w(\pi \mid h_{<t}) = 1$. Ranking the policies by posterior weight, let $\pi^{h_{<t}}_n$ be the one with the $n$\textsuperscript{th} largest posterior weight $w(\pi \mid h_{<t})$, breaking ties arbitrarily. Now let $\pa$ be the set of policies with posterior weights at least $\alpha$ times the sum of the posterior weights of policies that are at least as likely as it; that is,
\begin{equation} \label{eqn:topmodels}
    \pa := \{\pi^{h_{<t}}_n \in \Pi : w(\pi^{h_{<t}}_n \mid h_{<t}) \geq \alpha \sum_{m \leq n} w(\pi^{h_{<t}}_m \mid h_{<t})\}
\end{equation}
This is the set of policies the imitator takes seriously. The imitator is designed to be robust to policies in this set, so smaller $\alpha$ will make it more robust.
Let $\pi^d$ denote the demonstrator's policy, defined such that $\pi^d(q_t = 1 \mid h_{<t}) = 1$ for all values of $h_{<t}$. As later results suggest, $\alpha$ should be set a few orders of magnitude below $w(\pi^d)$; since $\pi^d$ is probably unknown to the programmers, or else there would be no need for imitation learning, $w(\pi^d)$ will have to be estimated. The imitator's policy $\pi^i_\alpha$ is defined in the next two equations:
\begin{equation} \label{eqn:imitator}
    \pi^i_\alpha(0, a \mid h_{<t}) := \min_{\pi' \in \pa} \pi'(1, a \mid h_{<t})
\end{equation}
The $0$ on the l.h.s. means the imitator is picking the action itself instead of deferring to the demonstrator, and the 1 on the r.h.s. means this is the probability of the demonstrator model $\pi'$ picking that same action.

The imitator uses the leftover probability to query. Let $\theta_q(h_{<t}) := 1 - \sum_{a \in \A} \pi^i_\alpha(0, a \mid h_{<t})$. $\theta_q$ is the probability with which the imitator queries the demonstrator to have it pick the action. Thus,
\begin{equation}
    \pi^i_\alpha(1, a \mid h_{<t}) := \theta_q(h_{<t}) \pi^d(1, a \mid h_{<t})
\end{equation}
One can see that $q_t$ records whether the demonstrator was involved in selecting the action. Using the model class and prior from Example \ref{ex:policies}, the time-complexity constraint makes $\pi^i_\alpha$ computable.

Conservatism with respect to probability estimates is a core technical innovation of our work. Taking the minimum over a set of models with high posterior weights is an approach to conservatism inspired by \citepos{cohen2020pessimism} pessimistic agent. The pessimistic agent, unlike ours, is a reinforcement learner, but it is also designed to keep certain (risky) events unlikely. By underestimating probabilities, the imitator only acts if it is sure the demonstrator might act that way.

We will also consider hypothetical imitator policies if the demonstrator policy were something else; for an arbitrary demonstrator policy $\pi$, let $\hat{\pi}_\alpha$ denote the corresponding imitator policy, so $\pi^i_\alpha = \hat{(\pi^d)}_\alpha$. This paper will investigate the probability distribution $\ptrue$ and compare it to $\p^{\pi^d}_\mu$.

\section{Toy Example} \label{sec:example}

We now walk through a toy example, in which our imitation learner has about a half-million demonstrator models in its model class $\Pi$. We begin by defining $\Pi$. The action space $\mathcal{A}$ of the demonstrator is $\texttt{null} \cup \{0, 1\}^4$. The observation space $\mathcal{O}$ is $\{\textrm{``''}, 1, 2, 3\}$. A demonstrator model $\pi \in \Pi$ defined by is a 12-tuple of the elements $\{1/3, 2/3, 1\}$. When the latest observation is 1, 2, or 3, let $x$ be the 1\textsuperscript{st} - 4\textsuperscript{th}, 5\textsuperscript{th} - 8\textsuperscript{th}, or 9\textsuperscript{th} - 12\textsuperscript{th} elements of 12-tuple. Then, the demonstrator model outputs four bits that are Bernoulli distributed according to each of the four elements of $x$. All demonstrator models output \texttt{null} when the latest observation is ``''. The true demonstrator also takes the form of such a demonstrator model. Each observation is randomly sampled; it is 1 with probability $1/4$, 2 with probability $1/16$, 3 with probability $1/64$, and otherwise ``''.

Let's give some flavor to this example. The demonstrator does client relations for a high-end travel agency with very fussy clients. The demonstrator gets a feel for her clients, and for any given night that a client needs a restaurant recommendation, the demonstrator sends a Boolean 4-tuple to the restaurant team, who identifies a suitable restaurant. The observation tells the demonstrator which of the three clients needs a recommendation, if any. The first bit of the Boolean 4-tuple tells the restaurant team whether the restaurant should have lots of vegetarian options, the second bit: should it have a Michelin star, the third: should it have unfamiliar local specialties, and the fourth: should it be Instagrammable. Why is the demonstrator stochastic? Many clients want a variety of styles of restaurants from night to night. The demonstrator couldn't write down the exact probabilities that she is using to generate these Boolean vectors; she goes off of intuition. If we run an imitator that only sometimes asks the demonstrator for help, we can free up some of the demonstrator’s time.

Unfortunately, in this toy environment, the fussy clients sometimes quit. Each client has a 4-tuple of probabilities that they would like their Boolean vector sampled from (conveniently in $\{1/3, 2/3, 1\}^4$). If it becomes clear that this is not how their Boolean vectors are being sampled, they quit. (``Becoming clear'' is operationalized as follows: $H_1$ is the hypothesis that their restaurant recommendations are being sampled correctly; $H_2$ is the hypothesis that some other 4-tuple in $\{1/3, 2/3, 1\}^4$ is producing their restaurant recommendations. If, given the set of all restaurant recommendations they have gotten, the likelihood ratio of $H_2$ exceeds 100, the client quits. Note that this happens if an element is ever \texttt{False} when it was supposed to be \texttt{True} with probability 1; some clients demand Michelin stars.) When recommendations are made by the demonstrator, who always correctly intuits the client’s desired distribution of restaurants, clients hardly ever quit. We would like clients to hardly ever quit even when the imitator frequently takes over.

For an imitator with $\alpha=1\textrm{e-14}$, Figure \ref{fig:queryrecord} shows how often it has to query the demonstrator to pick the restaurant features. Recommendations are random, and this is only one run. Running it with 20 different random seeds, the number of queries required is $486.75 \pm 52.63$ (out of $2^{15}$ timesteps), and no client ever quit. Returning to run depicted in Figure \ref{fig:queryrecord}, Table \ref{tab:exampleposterior} works through an example of the posterior and the imitator's behavior. The code for this toy example can be found at \url{https://tinyurl.com/imitation-toy-example}.

\begin{figure}
    \centering
    \includegraphics[width=\linewidth]{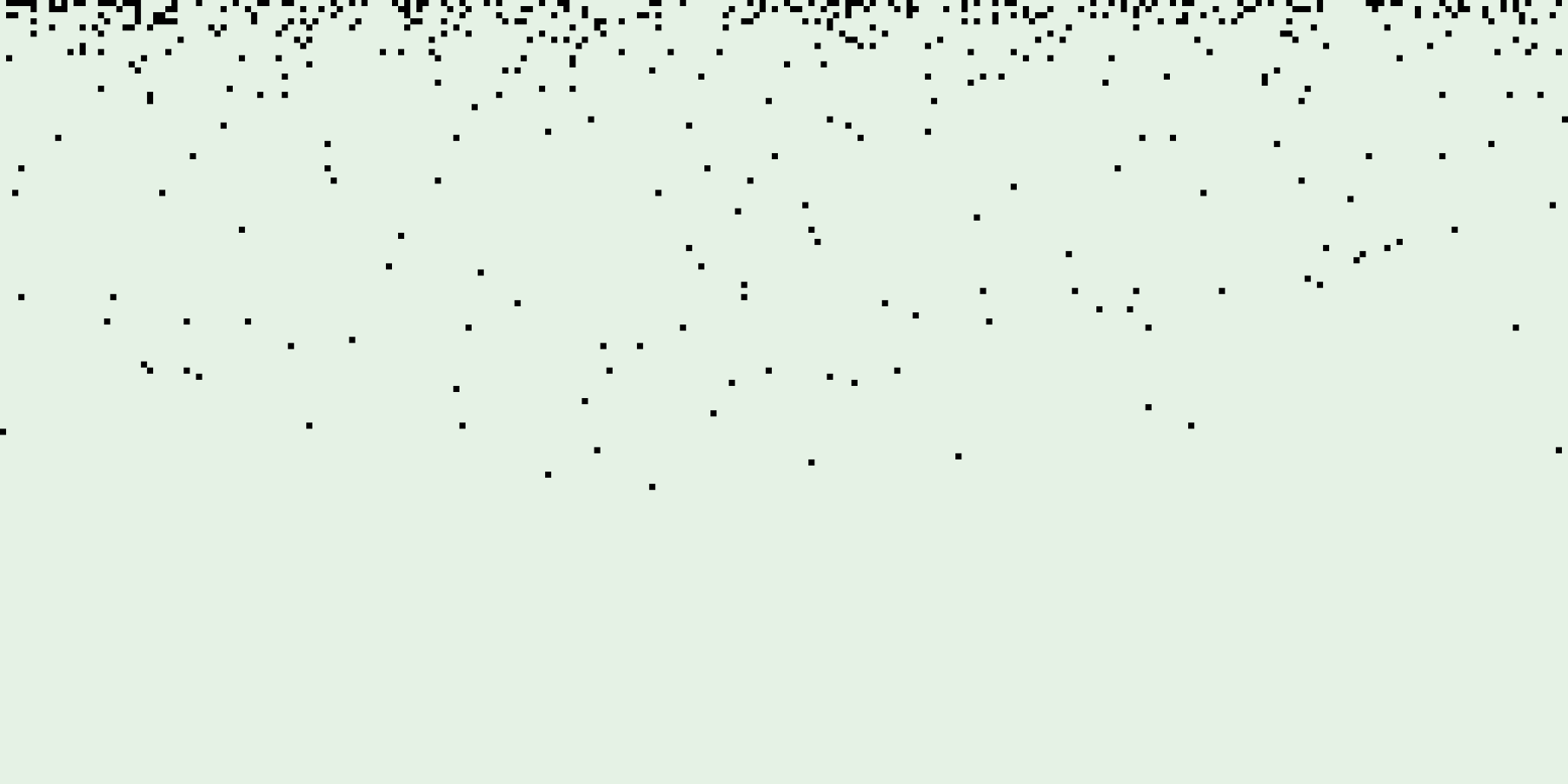}
    \caption{Timesteps when the imitator queries. $2^{15}$ timesteps are shown, with black representing a query, and green representing the imitator acting unassisted. Pixels are to be read like text, left to right, top to bottom. In the accompanying code, a random seed of 0 is used to generate this image.}
    \label{fig:queryrecord}
\end{figure}

\begin{table}[h]
    \centering
    \footnotesize
    \hfill
    \begin{tabular}{ccc}
1/3 & 2/3 & 1 \\
    \hline
\textbf{ -0.0000} & -44.0000 &     -inf \\
-50.0000 & \textbf{ -0.0000} &     -inf \\
\textbf{  0.0000} & -78.0000 &     -inf \\
-46.0000 & \textbf{ -0.0000} &     -inf \\
\\
-18.0000 & \textbf{ -0.0000} &     -inf \\
-69.7384 & -25.7384 & \textbf{ -0.0000} \\
\textbf{ -0.0000} & -22.0000 &     -inf \\
-69.7384 & -25.7384 & \textbf{ -0.0000} \\
\\
\textbf{ -0.3219} &  -2.3219 &     -inf \\
\textbf{ -0.0056} &  -8.0056 &     -inf \\
\textbf{ -0.0000} & -16.0000 &     -inf \\
-28.5303 & -10.5303 & \textbf{ -0.0010}
    \end{tabular}
    \hfill
\begin{tabular}{cc}
    $p([\texttt{False},  \texttt{True}, \texttt{False},  \texttt{True}] \mid \textrm{client 2})$ & Model  \\
    \hline
    0.11111 & (..., 2/3, 1, 2/3, 1, ...) \\
    0.14815 & (..., 2/3, 1/3, 1/3, 1, ...) \\
    0.22222 & \textbf{(..., 2/3, 1, 1/3, 1, ...)} \\
    0.29630 & (..., 1/3, 2/3, 1/3, 1, ...) \\
    0.44444 & (..., 1/3, 1, 1/3, 1, ...)
\end{tabular}
    \hfill
    \caption{\textit{Left:} Log$_2$ posterior at timestep 1000 for the run depicted in Figure \ref{fig:queryrecord}. The posterior decomposes into posterior probabilities for each of 12 features. Each block is a client, each row is a feature, and each entry is the log posterior probability that the demonstrator picks \texttt{True} for that feature with probability 1/3, 2/3, or 1, respectively. To get the posterior for a whole demonstrator model, as in Equation \ref{eqn:post}, add the independent posteriors for each element in the 12-tuple of the demonstrator model. The posterior weight on the truth is in bold for each feature; that is, the true demonstrator for this run is (1/3, 2/3, 1/3, 2/3, 2/3, 1, 1/3, 1, 1/3, 1/3, 1/3, 1). 
    \textit{Right:} At timestep 1000, with $\alpha=1\textrm{e-14}$, we have many top models, as defined in Equation \ref{eqn:topmodels}. The first column is a list of probabilities that different top models assign to the outcome [False True False True] for client 2. The second column contains examples of top models that assign those probabilities to the outcome [False True False True] for client 2, with the true model in bold. Recall a demonstrator model is defined by a 12-tuple, but the only relevant elements for client 2 are 5-8. All these models have posterior weight large enough to make it into the top set. Thus, the probability the imitator picks [\texttt{False}  \texttt{True} \texttt{False}  \texttt{True}] for client 2 is 0.11111, the minimum probability shown, as per Equation \ref{eqn:imitator}.
    }
    \label{tab:exampleposterior}
\end{table}

\section{Results}\label{sec:results}
For the whole of the paper, we assume:
\begin{assumption}[Realizability]
$\pi^d \in \Pi$.
\end{assumption}
That is, the imitator can conceive of the demonstrator. There may be some interesting results in the setting of approximate realizability, where $\exists \pi \in \Pi$ such that $\pi \approx_\varepsilon \pi^d$ in some sense, but that is out of our scope here.

We now state and discuss our key results before turning to selected proofs. Our first is that the imitator's query probability converges to 0 in mean cube. This result
\begin{itemize}
    \itemsep0em
    \item renders its resemblance to the demonstrator non-trivial, since always querying would yield perfect correspondence,
    \item is desirable in its own right if demonstrator access is a limited resource,
    \item and is instrumental in proving the remaining results, since low query probability implies little model disagreement.
\end{itemize}

\begin{restatable}[Limited Querying]{theorem}{thmqueryprob} \label{thm:queryprob}
\begin{equation*}
    \Etrue \left[\sum_{t=0}^\infty \theta_q(h_{<t})^3 \right] \leq |\mathcal{A}| \alpha^{-3}(24  w(\pi^d)^{-1} + 12)
\end{equation*}
\end{restatable}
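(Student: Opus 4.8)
The plan is to read the query probability as a measure of disagreement among the top models and to show that each query spends a bounded ``information budget'', so that the cubed disagreement cannot accumulate without bound. First I would rewrite $\theta_q(h_{<t}) = 1 - \sum_{a} \min_{\pi' \in \pa} \pi'(1,a \mid h_{<t})$ and, on the event $\pi^d \in \pa$ (the typical case, cf.\ Theorem \ref{thm:truthintop}), use the identity $\theta_q(h_{<t}) = \sum_{a}\big[\pi^d(1,a\mid h_{<t}) - \min_{\pi'\in\pa}\pi'(1,a\mid h_{<t})\big]$, valid because $\sum_a \pi^d(1,a\mid h_{<t}) = 1$ and every summand is nonnegative. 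Two structural facts about the top set do the heavy lifting, both read off the defining inequality \eqref{eqn:topmodels}: one gets $|\pa| \le 1/\alpha$ (a member of rank $n$ has weight at least $\alpha$ times a cumulative sum already containing $n$ equal-or-larger weights, forcing $n\le 1/\alpha$), and every $\pi' \in \pa$ satisfies $w(\pi'\mid h_{<t}) \ge \alpha\, w(\pi^d\mid h_{<t})$ (its weight dominates $\alpha$ times the weight of the single most likely model, which in turn dominates that of $\pi^d$).

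Next I would convert disagreement into a statement about a single worst model. Bounding the per-action gap by a sum over $\pa$ and then using $|\pa|\le 1/\alpha$ produces a witness $\pi^\dagger\in\pa$ with $\mathrm{TV}(\pi^d,\pi^\dagger)\ge \alpha\,\theta_q(h_{<t})$, where $\mathrm{TV}$ is total variation; the comparison $\mathrm{TV}\le H$ then gives $H^2(\pi^d,\pi^\dagger)\ge \alpha^2\theta_q(h_{<t})^2$, with $H^2(\pi^d,\pi') = \sum_a(\sqrt{\pi^d(a)}-\sqrt{\pi'(a)})^2$ the squared Hellinger distance. The engine is the per-model supermartingale $R^{\pi'}_t := \sqrt{w(\pi'\mid h_{<t})/w(\pi^d\mid h_{<t})}$: because the posterior in \eqref{eqn:post} updates only on query steps and $\E_{a\sim\pi^d}\sqrt{\pi'(a)/\pi^d(a)} = 1 - \tfrac12 H^2(\pi^d,\pi')$, one has $\E\big[R^{\pi'}_t - R^{\pi'}_{t+1}\mid h_{<t}\big] = \tfrac12\,\theta_q(h_{<t})\,R^{\pi'}_t\,H^2(\pi^d,\pi')$. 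Since $\pi^\dagger\in\pa$ forces $R^{\pi^\dagger}_t\ge\sqrt\alpha$, combining these bounds yields a pointwise inequality of the form $\theta_q(h_{<t})^3 \le \theta_q(h_{<t})\,R^{\pi^\dagger}_t\,H^2(\pi^d,\pi^\dagger)$ up to a bounded power of $\alpha^{-1}$ and a factor $|\A|$ incurred in passing from the largest action-gap to the full sum over $\A$. In other words, the cubed query probability is dominated by the expected one-step decrease of a per-model supermartingale, and the coefficient $|\A|\alpha^{-3}$ emerges from this bookkeeping together with the numerical Hellinger/Pinsker constants.

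The final step is to sum over $t$ and take $\Etrue$. For a fixed $\pi'$ the telescoping of its supermartingale bounds $\Etrue\sum_t \theta_q(h_{<t})\,R^{\pi'}_t\,H^2(\pi^d,\pi')$ by $2R^{\pi'}_0$. The difficulty, and what I expect to be the main obstacle, is that the witness $\pi^\dagger$ changes with $t$, and naively summing the per-model budgets over all of $\Pi$ reintroduces $\sum_{\pi'}\sqrt{w(\pi')/w(\pi^d)}$, which need not be finite. This is the aggregation problem across the time-varying top set, and it is sharpened by a genuine cancellation phenomenon: the Bayes mixture can agree exactly with $\pi^d$ even while top models disagree, so no potential depending only on $w(\pi^d\mid h_{<t})$ or on the mixture can control $\theta_q$, and the argument must see individual models. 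I expect to resolve this by charging the disagreement to the truth's own growth budget: the nonnegative supermartingale $-\ln w(\pi^d\mid h_{<t})$ loses at most $\ln(1/w(\pi^d))$ in expectation overall, while the martingale $1/w(\pi^d\mid h_{<t})$ has expectation $1/w(\pi^d)$, which is where the term $24\,w(\pi^d)^{-1}+12$ originates. Concretely, I would build one global potential from the true model's weight together with the at-most-$1/\alpha$ top-model weights, show its expected one-step decrease dominates $\theta_q(h_{<t})^3$ after the $\alpha^{-3}$ and $|\A|$ accounting above, and conclude by telescoping; the event $\pi^d\notin\pa$, where the disagreement identity fails, is absorbed by the same potential, since there $w(\pi^d\mid h_{<t})$ is small and the budget term $1/w(\pi^d)$ is correspondingly generous.
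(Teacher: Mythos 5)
Your structural observations are correct as far as they go: $|\pa| \leq 1/\alpha$, the weight bound $w(\pi' \mid h_{<t}) \geq \alpha\, w(\pi^d \mid h_{<t})$ for $\pi' \in \pa$, the extraction (on the event $\pi^d \in \pa$) of a witness $\pi^\dagger \in \pa$ with total variation at least $\alpha\,\theta_q(h_{<t})$, and the computation that $R^{\pi'}_t = \sqrt{w(\pi' \mid h_{<t})/w(\pi^d \mid h_{<t})}$ decreases in expectation by $\tfrac12\,\theta_q(h_{<t})\,R^{\pi'}_t\,H^2(\pi^d,\pi')$ per step, since the posterior moves only on queries. But the proof is incomplete, and the missing piece is exactly the one you flag yourself: the witness changes with $t$, the per-model telescoping budgets $R^{\pi'}_0 = \sqrt{w(\pi')/w(\pi^d)}$ do not sum over a countable $\Pi$, and the ``global potential'' that is supposed to rescue the argument is never constructed. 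A potential built from the weights of the current members of $\pa$ does not telescope: when a model enters or leaves the top set the potential jumps, and nothing in your sketch pays for those jumps (stopping each $R^{\pi'}$ at exit does not help, since models re-enter and infinitely many models may pass through $\pa$ over an infinite horizon). Likewise, your disposal of the event $\pi^d \notin \pa$ --- on which the identity $\theta_q(h_{<t}) = \sum_a \bigl[\pi^d(1,a \mid h_{<t}) - \min_{\pi' \in \pa} \pi'(1,a \mid h_{<t})\bigr]$ fails while the theorem is unconditional --- is an assertion, not an argument. So the proposal reduces the theorem to an aggregation lemma that is, in fact, the mathematical content of the result.

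It is instructive to see how the paper closes precisely this hole. First, it never conditions on $\pi^d \in \pa$: it transports the problem to pure sequence prediction over the class $\M = \{\p^{\hat{\pi}_\alpha}_\mu : \pi \in \Pi\}$, whose members agree on query bits, observations, and unqueried actions, so all predictive disagreement sits on post-query actions; those are observed only with probability $\theta_q(h_{<t})$, which is how a mean-square prediction bound acquires the extra factor of $\theta_q$ and becomes the mean-cube bound (with $|\A|$ entering via Jensen). Second --- and this is the answer to your aggregation problem --- the supporting Theorem \ref{thm:prederror} is proved with rank-indexed rather than identity-indexed objects: $\rho_n(x_{<t}) = \sum_{\nu \in \M_n^{x_{<t}}} w(\nu)\nu(x_{<t})$ is well defined no matter which models currently occupy the top set, and the cost of turnover is exactly the normalization excess $\sum_{x} \rho_n(x_{<t}x) - \rho_n(x_{<t}) \geq 0$ of Inequality \ref{eqn:antisem}, whose expected lifetime total Lemma \ref{lem:rhonorm} bounds by $w(\mu)^{-1}$ by sandwiching $\rho_n$ between $w(\mu)\mu$ and $\xi$ and telescoping. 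Only then are individual top models recovered, from differences of the successive SMAP mixtures $\rhosn$ and $\rhosnm$ with coefficients controlled by $\phi^{x_{<t}}_n > \alpha$, and summing over the at most $\lfloor 1/\alpha \rfloor$ ranks yields $\alpha^{-3}(24\,w(\mu)^{-1} + 12)$. If you want to salvage your Hellinger route, the analogous move would be to run your supermartingale argument on rank-indexed mixtures rather than on individual policies; as written, you have correctly identified the obstacle but not overcome it.
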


The in mean cube bound allows infinite querying, but it diminishes in frequency, or else the expectation of an infinite sum of cubed probabilities would not be finite. Since we query under uncertainty, both querying and uncertainty diminish in tandem; this is a theme for active learners in general. Error bounds in Bayesian prediction and MAP prediction tend to be $\Theta(\log(w(\textrm{truth})^{-1}))$ and $\Theta(w(\textrm{truth})^{-1})$ respectively, so theoretically, our case resembles the MAP one. The cubic dependence on $\alpha$ is unfortunate, and subsequent results inherit them; the only path we found to proving a bound was fairly circuitous, and we are unsure whether this dependence can be improved.

Our remaining results show that the imitator resembles the demonstrator on one condition: $\pi^d \in \pa$. Recall that $\pa$ is a set of top demonstrator models that the imitator takes seriously, and $\pi^d$ is the true demonstrator model. Low model disagreement implies high accuracy when the truth is one of those models, and recall that our querying regime promises low model disagreement within finite time.

Fortunately, this condition has high probability for $\alpha <\!\!< w(\pi^d)$.
\begin{restatable}[Top Models Contain Truth]{theorem}{thmtruthintop}
\label{thm:truthintop}
$\ptrue(\forall t : \pi^d \in \pa) \geq 1 - \alpha w(\pi^d)^{-1}$
\end{restatable}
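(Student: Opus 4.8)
The plan is to reduce the all-time event $\{\forall t : \pi^d \in \pa\}$ to a threshold on the posterior weight of the true demonstrator and then control that threshold with a maximal inequality for a nonnegative supermartingale. First I would record a convenient sufficient condition for membership in the top set. If $\pi^d$ has posterior rank $n$ at history $h_{<t}$, then by Equation~\ref{eqn:topmodels} we have $\pi^d \in \pa$ whenever $w(\pi^d \mid h_{<t}) \geq \alpha \sum_{m \leq n} w(\pi^{h_{<t}}_m \mid h_{<t})$; because the posterior weights sum to $1$, the right-hand side is at most $\alpha$, so $w(\pi^d \mid h_{<t}) \geq \alpha$ already forces $\pi^d \in \pa$. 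Hence $\{\forall t : w(\pi^d \mid h_{<t}) \geq \alpha\} \subseteq \{\forall t : \pi^d \in \pa\}$, and it suffices to show $\ptrue(\exists t : w(\pi^d \mid h_{<t}) < \alpha) \leq \alpha\, w(\pi^d)^{-1}$.

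The core step is to exhibit $S_t := w(\pi^d \mid h_{<t})^{-1}$ as a nonnegative supermartingale under $\ptrue$. From the unnormalized weights in Equation~\ref{eqn:post}, $S_t = \sum_{\pi \in \Pi} L_t(\pi)$ where $L_t(\pi) := \frac{w(\pi)}{w(\pi^d)} \prod_{k < t :\, q_k = 1} \frac{\pi(1, a_k \mid h_{<k})}{\pi^d(1, a_k \mid h_{<k})}$ is the likelihood ratio of $\pi$ against $\pi^d$, with $L_t(\pi^d) \equiv 1$. I would check that the multiplicative increment of each $L_t(\pi)$ has conditional expectation at most $1$: on non-query steps ($q_k = 0$) the posterior is frozen and the increment is exactly $1$, while on query steps ($q_k = 1$) the action $a_k$ is drawn from $\pi^d(\cdot \mid h_{<k})$, so the conditional expectation of the increment $\pi(1, a_k \mid h_{<k})/\pi^d(1, a_k \mid h_{<k})$ equals $\sum_{a :\, \pi^d(1, a \mid h_{<k}) > 0} \pi(1, a \mid h_{<k}) \leq 1$ (equality holding exactly when $\pi$'s support lies in that of $\pi^d$, the absolutely continuous case). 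Summing over $\Pi$, which is legitimate by nonnegativity, makes $S_t$ a nonnegative supermartingale with deterministic start $S_0 = \sum_{\pi \in \Pi} w(\pi)/w(\pi^d) = w(\pi^d)^{-1}$.

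Finally I would invoke Ville's maximal inequality for nonnegative supermartingales, $\ptrue(\sup_t S_t \geq \lambda) \leq \Etrue[S_0]/\lambda$, taking $\lambda = 1/\alpha$ to obtain $\ptrue(\sup_t S_t \geq 1/\alpha) \leq \alpha\, w(\pi^d)^{-1}$. Since $\{w(\pi^d \mid h_{<t}) < \alpha\} = \{S_t > 1/\alpha\}$, this bounds $\ptrue(\exists t : w(\pi^d \mid h_{<t}) < \alpha)$, and combining with the reduction of the first paragraph gives $\ptrue(\forall t : \pi^d \in \pa) \geq 1 - \alpha\, w(\pi^d)^{-1}$. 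I expect the main obstacle to be the supermartingale verification rather than the inequality itself: one must handle the randomness of the query indicator $q_k$ (the posterior advances only on query steps), observe that actions outside $\pi^d$'s support occur with $\ptrue$-probability zero so the ratios are almost surely well defined, and justify interchanging the countable sum over $\Pi$ with the conditional expectation.
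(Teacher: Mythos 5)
Your proposal is correct and follows essentially the same route as the paper: the same reduction of $\{\forall t : \pi^d \in \pa\}$ to the posterior-weight threshold $w(\pi^d \mid h_{<t}) \geq \alpha$, the same nonnegative supermartingale $w(\pi^d \mid h_{<t})^{-1}$ with initial value $w(\pi^d)^{-1}$, and the same maximal inequality for nonnegative supermartingales at level $\alpha^{-1}$. Your per-model likelihood-ratio decomposition of $S_t$ is merely a rewriting of the paper's direct Bayes-rule computation (the paper's step $(c)$, adding nonnegative terms outside the support of $\pi^d$, is exactly your observation that each increment has conditional expectation at most $1$), so the two verifications are algebraically equivalent.
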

Let $E$ be the event $\forall t : \pi^d \in \pa$, so the true demonstrator policy is always in the top set. The high probability of $E$ is mainly of interest in the context of subsequent results that depend on it. For instance, conditioned on $E$, the imitator, when picking its own actions, converges to the demonstrator in mean cube.
\begin{restatable}[Predictive Convergence]{theorem}{thmimtodem} \label{thm:imtodem}
For $\alpha < w(\pi^d)$,
\begin{equation*}
    \Etrue \left[\sum_{t=0}^\infty \left(\sum_{a \in \A} \va \pi^i_\alpha(0, a \mid h_{<t}) - \pi^d(1, a \mid h_{<t}) \va \right)^3 \vd E \right] \leq \frac{|\mathcal{A}| \alpha^{-3}(24  w(\pi^d)^{-1} + 12)}{1 - \alpha w(\pi^d)^{-1}}
\end{equation*}
\end{restatable}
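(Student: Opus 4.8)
The plan is to reduce this conditional expectation directly to the quantity already bounded in Theorem \ref{thm:queryprob}, by exploiting a pointwise identity that holds on the conditioning event $E$. The governing observation is that on $E$ we have $\pi^d \in \pa$ at every timestep $t$, so by the definition of the imitator in Equation \ref{eqn:imitator}, for each action $a$ the conservative estimate $\pi^i_\alpha(0, a \mid h_{<t}) = \min_{\pi' \in \pa} \pi'(1, a \mid h_{<t}) \leq \pi^d(1, a \mid h_{<t})$, since $\pi^d$ itself is one of the policies in the minimization. Thus on $E$ each term inside the absolute value is signed, and summing over $a$ gives $\sum_{a \in \A}\bigl(\pi^d(1, a \mid h_{<t}) - \pi^i_\alpha(0, a \mid h_{<t})\bigr) = 1 - \sum_{a \in \A}\pi^i_\alpha(0, a \mid h_{<t}) = \theta_q(h_{<t})$, using that $\pi^d$ is a distribution over actions summing to $1$ together with the definition $\theta_q(h_{<t}) := 1 - \sum_{a \in \A}\pi^i_\alpha(0, a \mid h_{<t})$. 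Consequently, on $E$ the $\ell_1$ summand equals exactly $\theta_q(h_{<t})^3$ at every timestep, so the left-hand side of the theorem equals $\Etrue\bigl[\sum_{t=0}^\infty \theta_q(h_{<t})^3 \mid E\bigr]$.

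Next I would remove the conditioning at the cost of a multiplicative factor. Writing $X = \sum_{t=0}^\infty \theta_q(h_{<t})^3 \geq 0$, nonnegativity gives $\Etrue[X \mid E]\,\ptrue(E) \leq \Etrue[X]$, hence $\Etrue[X \mid E] \leq \Etrue[X] / \ptrue(E)$. I would then bound the numerator $\Etrue[X] = \Etrue\bigl[\sum_{t=0}^\infty \theta_q(h_{<t})^3\bigr]$ by Theorem \ref{thm:queryprob}, yielding $|\A|\alpha^{-3}(24 w(\pi^d)^{-1} + 12)$, and bound the denominator by Theorem \ref{thm:truthintop}, which gives $\ptrue(E) \geq 1 - \alpha w(\pi^d)^{-1}$. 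The hypothesis $\alpha < w(\pi^d)$ is precisely what makes this lower bound positive, so that conditioning on $E$ is well-defined and the division is legitimate. Combining the two bounds produces exactly the stated right-hand side.

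The only genuinely nontrivial step is the first one: recognizing that, on $E$, the conservatism of the imitator forces it to \emph{underestimate} every action probability relative to $\pi^d$, which collapses the $\ell_1$ distance into the single scalar $\theta_q(h_{<t})$. This is where the design of the imitator (taking a minimum over $\pa$) pays off, and it is what lets us recycle the querying bound of Theorem \ref{thm:queryprob} wholesale rather than proving a fresh convergence estimate. Everything after that identity is routine bookkeeping and two applications of already-established theorems, so I do not anticipate any analytic obstacle beyond making the signed-difference argument precise.
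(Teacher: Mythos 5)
Your proposal is correct and matches the paper's own proof essentially step for step: on $E$ the conservative minimum forces $\pi^i_\alpha(0, a \mid h_{<t}) \leq \pi^d(1, a \mid h_{<t})$, collapsing the $\ell_1$ distance to $\theta_q(h_{<t})$, after which conditioning is removed via nonnegativity and Theorems \ref{thm:queryprob} and \ref{thm:truthintop} supply the numerator and denominator. The only cosmetic difference is that you note the identity holds with equality (since $\sum_{a}\pi^d(1,a \mid h_{<t}) = 1$), while the paper is content with the inequality $\leq \theta_q(h_{<t})$, which is all that is needed.
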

This theorem finally justifies our calling $\pi^i_\alpha$ an ``imitator'', since the policy converges to that of the demonstrator. Existing literature on imitation learning does little to suggest that imitators exist in non-restarting environments. This result shows that they do, at least in a high-probability sense. Note that the denominator is the probability of $E$, which will be nearly $1$ for appropriate choice of $\alpha$. The requirement that $\alpha < w(\pi^d)$ has important consequence: when $\alpha$ is set appropriately, the bounds in this theorem and Theorem \ref{thm:queryprob} are effectively quartic in $w(\pi^d)^{-1}$. We do not know if a better rate is possible under additional assumptions. It is even possible that stronger results are available without additional assumptions, and we simply failed to identify them. We think this is a ripe area for research.

We argue informally that this disappointing dependence can be mitigated in some circumstances. By pre-training with $N$ consecutive demonstrator queries and calling the posterior at that point the new ``prior'' for the purposes of our analysis, the ``prior'' on $w(\pi^d)$ could usually be made quite large, unless most demonstrator models behave extremely similarly for the first $N$ steps. Consider an extreme case: many models of comparable weight almost agree with the true model, except one disagrees at $t=1$, one at $t=2$, etc. In this case, the posterior on the truth increases very slightly every step, as models are excluded one by one. If, on the other hand, half of demonstrator models confidently predict one action, and half confidently predict another, the posterior on the truth will likely nearly double in one step. So to the extent that a large fraction of models in $\Pi$ disagree with $\pi^d$ within the first $N$ steps, the posterior on the truth would increase exponentially following pre-training.
That said, the quartic dependence on $w(\pi^d)^{-1}$ in the worst case is a weakness of our approach.

Any pair of these first three results would be uninteresting on their own, but jointly, they show that with high probability, the imitator converges to the demonstrator with limited querying.

Our stronger results below apply when the environment and demonstrator policy do not depend on the query record. This means that whatever action is taken, the effect does not depend on whether the imitator chose it or the demonstrator did. We would like events to unfold similarly when we replace the demonstrator with the imitator, but this is impossible if the environment discriminates between them. Indeed, if the environment treats identical actions differently depending on whether they were selected by imitator or demonstrator, it's unclear what imitation accomplishes. We define fairness formally in Section \ref{sec:keyproofs}.

In a fair setting, we bound the KL divergence between $\ptrue$ and $\p^{\pi^d}_\mu$, the first meaning that actions are picked according to our imitation policy, and the second meaning that all actions are picked by the demonstrator. The objective of imitation is most easily characterized as outputting demonstrator-like actions, but the purpose of imitation learning is for events to unfold similarly. Small errors in the limit do not guarantee that property; this result is only possible with small errors for the imitator's whole lifetime.

\begin{restatable}[KL Bound]{theorem}{thmkl} \label{thm:kl}
Suppose that $\mu$ and $\pi^d$ are fair, and $\alpha < w(\pi^d)$. Letting the two probability measures below be restricted to $(\A \times \Ob)^t$ (that is, marginalizing over the query record, and considering only the first $t$ timesteps),
\begin{equation*}
    \KL_{t} \left(\ptrue(\cdot \mid E) \vb \vb \p^{\pi^d}_{\mu}(\cdot \mid E) \right) \leq \frac{\alpha^{-1} |\mathcal{A}|^{1/3} (24  w(\pi^d)^{-1} + 12)^{1/3}}{(1-\alpha/w(\pi^d))^2} t^{2/3} - \log(1-\alpha/w(\pi^d))
\end{equation*}
\end{restatable}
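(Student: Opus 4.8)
The plan is to start from the chain rule for KL divergence over the $t$ timesteps and exploit fairness to discard everything except the action terms. Writing $x_s=(a_s,o_s)$, fairness means both measures generate $o_s$ from the same kernel $\mu(o_s\mid h_{<s}a_s)$, independent of the query record, so after marginalizing over the query record the per-step observation contributions cancel and only the log-ratios of the action marginals survive. Thus, modulo the conditioning on $E$ discussed below, I would reduce the bound to controlling $\sum_{s<t}\log\frac{\pi^i_\alpha(a_s\mid h_{<s})}{\pi^d(a_s\mid h_{<s})}$ in expectation, where $\pi^i_\alpha(a\mid h_{<s})$ and $\pi^d(a\mid h_{<s})$ are the action marginals of imitator and demonstrator.

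The engine of the bound is a pathwise per-step estimate valid on $E$. On $E$ we have $\pi^d\in\pa$, so the conservative construction gives $\pi^i_\alpha(0,a\mid h_{<s})=\min_{\pi'\in\pa}\pi'(1,a\mid h_{<s})\le\pi^d(1,a\mid h_{<s})$; since $\pi^i_\alpha(a\mid h_{<s})=\pi^i_\alpha(0,a\mid h_{<s})+\theta_q(h_{<s})\pi^d(1,a\mid h_{<s})$ and $\pi^d(a\mid h_{<s})=\pi^d(1,a\mid h_{<s})$, the ratio of action marginals is at most $1+\theta_q(h_{<s})$ for every realized action. Hence $\log\frac{\pi^i_\alpha(a_s\mid h_{<s})}{\pi^d(a_s\mid h_{<s})}\le\log(1+\theta_q(h_{<s}))\le\theta_q(h_{<s})$ holds deterministically along any trajectory in $E$. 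The same underestimation shows $\theta_q(h_{<s})=\sum_a(\pi^d(1,a\mid h_{<s})-\pi^i_\alpha(0,a\mid h_{<s}))$ equals the $\ell_1$ discrepancy appearing in Theorem \ref{thm:imtodem} on $E$, which is the link I need to that result.

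Summing the pathwise bound over $s<t$ and applying H\"older's inequality with conjugate exponents $3$ and $3/2$ gives $\sum_{s<t}\theta_q(h_{<s})\le t^{2/3}(\sum_{s<t}\theta_q(h_{<s})^3)^{1/3}$; taking the $E$-conditional expectation, pushing it inside the cube root by Jensen, and invoking Theorem \ref{thm:imtodem} (using that $\theta_q$ equals that discrepancy on $E$) produces the $t^{2/3}$ factor together with the constant $\alpha^{-1}|\A|^{1/3}(24w(\pi^d)^{-1}+12)^{1/3}$. The additive $-\log(1-\alpha/w(\pi^d))$ term comes from the change of measure between conditioned and unconditioned laws: the Radon--Nikodym derivative of the $E$-conditioned measures carries a global factor $\p^{\pi^d}_\mu(E)/\ptrue(E)\le 1/\ptrue(E)$, and Theorem \ref{thm:truthintop} bounds $\ptrue(E)\ge 1-\alpha/w(\pi^d)$.

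The main obstacle is that $E$ is a full-trajectory event whereas the divergence is taken over only the first $t$ steps, so conditioning and marginalization do not commute and a full-space bound (which would be $t$-independent but possibly infinite, since $\sum_s\theta_q$ can diverge while $\sum_s\theta_q^3$ converges) is unavailable. Concretely, the density of the conditioned $t$-step marginal factorizes as the unconditioned action-ratio, times $\ptrue(E\mid a_{<t}o_{<t})/\p^{\pi^d}_\mu(E\mid a_{<t}o_{<t})$, times the global constant above; the middle factor---comparing the conditional probability of the tail of $E$ under an imitator continuation versus a pure-demonstrator continuation---does not obviously vanish, and the posterior driving $E$ updates only on queried actions under $\ptrue$ but on every action under $\p^{\pi^d}_\mu$. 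Controlling this factor is the delicate step, and I would attempt it using the same supermartingale over posterior weights that underlies Theorem \ref{thm:truthintop}. I expect this term to be exactly where the extra powers of $(1-\alpha/w(\pi^d))^{-1}$ in the stated denominator originate, the pathwise argument above having already disposed of the other two factors cleanly.
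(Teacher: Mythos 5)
Your skeleton is the paper's own proof: the fairness-based chain-rule reduction to action log-ratios is step $(b)$ of the paper's argument; your pathwise inequality $\pi^i_\alpha(a \mid h_{<k}) \leq (1+\theta_q(h_{<k}))\pi^d(a \mid h_{<k})$ on $E$ is Inequality \ref{ineq:piipid} (the paper uses its expectation form, Lemma \ref{lem:deltalesstheta}, giving $\KL_1(\pi^i_\alpha(\cdot \mid h_{<k}) \,\|\, \pi^d(\cdot \mid h_{<k})) \leq \theta_q(h_{<k})$); your H\"older-with-exponents-$(3,3/2)$ step is Inequality \ref{ineq:jensencube}, except the paper feeds in Theorem \ref{thm:queryprob} directly rather than Theorem \ref{thm:imtodem} (your observation that $\theta_q$ equals the $\ell_1$ discrepancy on $E$ makes that substitution sound); and the additive $-\log(1-\alpha/w(\pi^d))$ arises exactly as you say, from the global factor and Theorem \ref{thm:truthintop}. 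The genuine gap is the one you flag yourself, and your plan for closing it is not what the paper does. The paper never forms your factorization with the tail ratio $\ptrue(E \mid h^{\setminus}_{<t})/\p^{\pi^d}_\mu(E \mid h^{\setminus}_{<t})$, and it runs no supermartingale on the tail of $E$. Instead it keeps the conditioning inside the chain rule and pays for it with two crude deconditioning moves: it bounds the ${}^E\!$-conditional one-step probability by the unconditional one at cost $1/\ptrue(E)$, so that each summand becomes a genuine nonnegative one-step KL controlled by $\theta_q$, and it bounds the ${}^E\!$-conditional expectation of the nonnegative sum $\sum_k \theta_q(h_{<k})$ by $1/\ptrue(E)$ times its unconditional expectation. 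Those two steps---not the tail factor---are the source of the $(1-\alpha/w(\pi^d))^{-2}$, so your conjecture about where those powers originate does not reconstruct the stated constant; indeed, your route through Theorem \ref{thm:imtodem} contributes only $(1-\alpha/w(\pi^d))^{-1/3}$ on the $t^{2/3}$ term, leaving the rest unaccounted for.

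Moreover, the supermartingale attack you propose for the middle factor is likely a dead end: on $E$ the posterior weight $w(\pi^d \mid h_{<t})$ is only guaranteed to exceed $\alpha$ and can be arbitrarily close to it, so the supermartingale lower-bounds the demonstrator-continuation probability of the tail of $E$ only by $1-\alpha/w(\pi^d \mid h_{<t})$, which is vacuous near the boundary; what you would need is a bound on $\E_{h_{<t} \sim {}^E\!\ptrue}\left[-\log \p^{\pi^d}_\mu(E \mid h^{\setminus}_{<t})\right]$, and this argument does not supply one. To your credit, your diagnosis pinpoints a step the paper itself passes over lightly: in the paper's inequality $(a)$, ${}^E\!\p^{\pi^d}_\mu(h^{\setminus}_{<t}) = \p^{\pi^d}_\mu(h^{\setminus}_{<t})\p^{\pi^d}_\mu(E \mid h^{\setminus}_{<t})/\p^{\pi^d}_\mu(E)$ is replaced by $\p^{\pi^d}_\mu(h^{\setminus}_{<t})/\p^{\pi^d}_\mu(E)$, which enlarges the denominator of the log-ratio and is therefore an inequality in the unhelpful direction; the nonnegative term $-\log \p^{\pi^d}_\mu(E \mid h^{\setminus}_{<t})$ is dropped without comment. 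So your worry is sharper than the paper's treatment of it---but as a proof, your attempt establishes the theorem only modulo the very factor you identify, and lacks the paper's per-step deconditioning device that avoids forming it on the $\ptrue$ side.
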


Notably, $\KL_t / t \to 0$ in the limit. The direction of the divergence resembles the variational objective (with the ground truth on the right). Thus, there may be some events that only the demonstrator would cause, but no events that only the imitator would. This consequence is made explicit in our final result.

We construct an upper bound for the probability of an event given the probability of the event if the demonstrator were acting the whole time. This bound is mainly of interest for ``bad'' events.

\begin{restatable}[Preserving Unlikeliness]{theorem}{thmbadevent} \label{thm:badevent}
Fix $t$. Let $B \subset (\A \times \Ob)^t$ be a (bad) event, and extending $B$ to the outcome space $(\{0, 1\} \times \A \times \Ob)^t = \mathcal{H}^t$, let $D = B \cap E$. Then, for fair $\mu$ and $\pi^d$,

\begin{equation*}
    \ptrue(D) \leq \frac{t^2 s_\alpha}{\left(\log \frac{t^2 s_\alpha}{27 \p^{\pi^d}_\mu(B)} - 3 \log \log \left(1 + \frac{t^{2/3} s_\alpha^{1/3}}{3 \p^{\pi^d}_\mu(B)^{1/3}}\right) \right)^3}
\end{equation*}
where $s_\alpha = |\A| \alpha^{-3}(24  w(\pi^d)^{-1} + 12)$.
\end{restatable}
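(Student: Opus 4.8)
The plan is to control $\ptrue(D)=\ptrue(B\cap E)$ by changing measure to $\p^{\pi^d}_\mu$, with the size of the likelihood ratio governed by the cube-summability of the query probability from Theorem \ref{thm:queryprob}. First I would record a per-step likelihood-ratio bound that is valid on $E$: since $\pi^d\in\pa$ at every step, $\pi^i_\alpha(0,a\mid h_{<k})=\min_{\pi'\in\pa}\pi'(1,a\mid h_{<k})\le\pi^d(1,a\mid h_{<k})$, and because $\pi^i_\alpha(1,a\mid h_{<k})=\theta_q(h_{<k})\,\pi^d(1,a\mid h_{<k})$, the imitator's marginal action law satisfies $\pi^i_\alpha(a\mid h_{<k})\le(1+\theta_q(h_{<k}))\,\pi^d(a\mid h_{<k})$.

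Next I would set $M_t=\prod_{k<t}\pi^d(a_k\mid h_{<k})/\pi^i_\alpha(a_k\mid h_{<k})$ and $Z_t=1/M_t$. Fairness is what makes the change of measure work: because $\mu$ and $\pi^d$ see the history only through the actions and observations, the reweighting by $M_t$ converts the imitator's action law into the demonstrator's step by step while leaving the observation kernel untouched, and the query record integrates out. This should yield the identity $\Etrue[\mathbb{1}_B M_t]=\p^{\pi^d}_\mu(B)$ for every $B\subseteq(\A\times\Ob)^t$. Hence, for any threshold $\ell>0$, on the event $\{Z_t\le e^\ell\}=\{M_t\ge e^{-\ell}\}$ I have $\mathbb{1}_{B\cap E\cap\{Z_t\le e^\ell\}}\le e^\ell\,\mathbb{1}_B M_t$ pointwise, giving $\ptrue(B\cap E\cap\{Z_t\le e^\ell\})\le e^\ell\,\p^{\pi^d}_\mu(B)$.

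For the complementary event I would spend the cube budget. On $E$, the per-step bound gives $\log Z_t\le\sum_{k<t}\log(1+\theta_q(h_{<k}))\le\sum_{k<t}\theta_q(h_{<k})\le t^{2/3}\big(\sum_{k<t}\theta_q(h_{<k})^3\big)^{1/3}$ by H\"older, so $(\log Z_t)_+^3\le t^2\sum_{k<t}\theta_q(h_{<k})^3$ and therefore $\Etrue[(\log Z_t)_+^3\,\mathbb{1}_E]\le t^2 s_\alpha$ by Theorem \ref{thm:queryprob}, where $s_\alpha=|\A|\alpha^{-3}(24 w(\pi^d)^{-1}+12)$. Markov's inequality then gives $\ptrue(\{Z_t>e^\ell\}\cap E)\le t^2 s_\alpha/\ell^3$, and adding the two pieces yields $\ptrue(D)\le e^\ell\,\p^{\pi^d}_\mu(B)+t^2 s_\alpha/\ell^3$ for every $\ell>0$. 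It remains to choose $\ell$ near the balance point $e^\ell\,\p^{\pi^d}_\mu(B)=t^2 s_\alpha/\ell^3$, whose solution is $\ell\approx\log\!\big(t^2 s_\alpha/\p^{\pi^d}_\mu(B)\big)-3\log\log(\cdots)$; carefully lower-bounding the solution of this transcendental equation produces the stated closed form, the constant $27=3^3$ and the $3\log\log(1+\cdots)$ correction arising from that estimate. Note that only the unconditioned quantity $\p^{\pi^d}_\mu(B)$ enters, so no demonstrator-side version of Theorem \ref{thm:truthintop} is needed.

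The main obstacle is the change-of-measure identity $\Etrue[\mathbb{1}_B M_t]=\p^{\pi^d}_\mu(B)$. Its subtlety is that $\pi^i_\alpha(a\mid h_{<k})$ is \emph{not} a function of the action--observation history alone: it depends on the query record through the posterior, which updates only on queried steps, so the identity is not a routine Radon--Nikodym computation. It should hold only because fairness forces the demonstrator law and the observation kernel to ignore the query record, so that the record marginalizes away after the step-by-step cancellation inside $M_t$. I expect verifying this rigorously, together with the boundary cases where $\pi^i_\alpha(a\mid h_{<k})=0$, to be the delicate part, while the cube tail estimate and the optimization over $\ell$ are routine.
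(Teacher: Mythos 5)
Your strategy is sound and its execution genuinely differs from the paper's, while sharing the same core ingredients: the per-step bound $\pi^i_\alpha(a \mid h_{<k}) \leq (1+\theta_q(h_{<k}))\,\pi^d(a \mid h_{<k})$ on $E$, a fairness-driven change of measure, and the H\"older/Jensen step converting the cube budget of Theorem \ref{thm:queryprob} into a bound on $\sum_{k<t}\theta_q$. Your worry about the change-of-measure identity is well-placed but resolves exactly as you suggest: summing out the query record backwards, step by step, using fairness of $\mu$ and $\pi^d$ --- this is precisely the iterated argument in the paper's steps $(a)$--$(c)$; note only that you get $\Etrue[\mathbf{1}_B M_t] \leq \p^{\pi^d}_\mu(B)$ rather than equality when demonstrator-positive paths are imitator-null, which is the direction you need. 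Where you diverge is the tail handling. The paper never splits into a union bound: it conditions everything on $D$, so Theorem \ref{thm:queryprob} gives $\Etrue[\sum_{k<t}\theta_q(h_{<k})^3 \mid D] \leq s_\alpha/\ptrue(D)$ (the budget inflates as the event gets rarer --- the very pathology you flag), then it lower-bounds $\Etrue[\prod_{k<t}(1+\theta_q(h_{<k}))^{-1} \mid D]$ by $\exp(-t^{2/3}s_\alpha^{1/3}\ptrue(D)^{-1/3})$ via a multivariate Jensen inequality, and upper-bounds the same conditional expectation by $\p^{\pi^d}_\mu(B)/\ptrue(D)$ via the change of measure. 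This yields a single self-referential inequality in $\ptrue(D)$, solved exactly with the Lambert-$W$ function and relaxed via $W(z) \geq \log z - \log\log(1+z)$; the $27$ and the $3\log\log$ correction come from cubing $\ell = 3W(z)$, the same place as in your balance heuristic.

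The gap is quantitative but real: your final optimization cannot produce the stated constant. Writing $A = t^2 s_\alpha$, $P = \p^{\pi^d}_\mu(B)$, and $z = A^{1/3}/(3P^{1/3})$, your bound is $\ptrue(D) \leq \inf_{\ell>0}\,(e^\ell P + A/\ell^3)$. For the second term alone to be at most the stated right-hand side you need $\ell \geq 3(\log z - \log\log(1+z))$, at which point $e^\ell P \geq A/(27\log^3(1+z)) > 0$, so the sum strictly exceeds the stated bound; indeed $\min_\ell (e^\ell P + A/\ell^3)$ sits between $A/(27W(z)^3)$ and the balance value $2A/(27W(z)^3)$, and the slack in $W(z) \geq \log z - \log\log(1+z)$ is asymptotically too small to absorb the needed factor $2^{1/3}$ inside the cube. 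So your route proves the theorem with numerator $2t^2 s_\alpha$ (or with the balance-optimized $(1+3/\ell^*)$ factor), qualitatively the same polylogarithmic statement but not the inequality verbatim. The paper's conditional formulation avoids the additive split altogether: both the change-of-measure bound and the query-budget bound constrain the \emph{same} conditional expectation given $D$, so no probability mass is spent twice. To recover the exact statement you would either adopt that conditional/Lambert-$W$ fixed-point argument or accept the weakened constant.
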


That is, as $\p^{\pi^d}_\mu(B)^{-1} \to \infty$, $\ptrue(D)^{-1} \to \infty$ at least polylogarithmicly. If an event would have been extremely unlikely under the demonstrator's policy, a similar event is unlikely when running the imitator.

Whereas existing work on imitation learners attempts to be robust to a bounded loss function, our Preserving Unlikeliness Theorem is relevant even in the absence of a uniform bound on badness. In the real world, to quote Theon Greyjoy, ``It can always be worse''. But some bounds on badness are possible: we tolerate one-in-ten-chance events; they happen, and we get on with it. One-in-a-hundred-chance events can be meaningfully worse. But in a world largely governed by humans, we keep most truly devastating events below even a 1\% chance. It's hard to apply similar bounds to the badness of one-in-a-billion-chance events, and in general, as the probability gets smaller, a loss function should countenance steadily larger losses. When an event goes from a 1\% to a 2\% chance, we should be much less concerned than if it went from $10^{-9}$ to 1\%. In the extreme, if an event has probability $0$ under a demonstrator's policy, there might be an arbitrarily good reason for that. Whereas the bounded loss functions of all existing work ignore this effect, our Theorem \ref{thm:badevent} does not.

The main weaknesses of our results are what they require: a model class that includes the truth and a good choice of $\alpha$. Setting $\alpha$ well requires estimating $w(\pi^d)$, something we cannot offer general guidance on; it would depend entirely on the exact nature of the prior. And realistically, in many contexts, the realizability assumption is infeasible. There will always be mismatch between a computational model of a demonstrator and the true demonstrator. We hope this paper opens the door for other research into relaxing the realizability assumption. Plausibly, if the best approximation in $\Pi$ of $\pi^d$ produces certain bad events with low probability, then the imitator will too.

\section{Roadmap and Notation for the Proof of Theorem \ref{thm:queryprob}} \label{sec:proofnotation}
Much of the work of this paper is to prove Theorem \ref{thm:queryprob}. In this section, we state a theorem on which it depends, and we introduce the mathematical objects required to prove it.

The imitator queries when the top few demonstrator models disagree, so we bound the errors that those models can make over the agent's lifetime. We first must establish a finite bound on the errors of such models in ordinary Bayesian sequence prediction. We define that here.

Let $\mathcal{X}$ be an arbitrary finite alphabet. Let $\nu$ be a probability measure over $\mathcal{X}^\infty$ with the event space generated by the cylinder sets $\{\{x_{<t}x_{t:\infty} \ \mid \ x_{t:\infty} \in \mathcal{X}^\infty\}\mid x_{<t} \in \mathcal{X}^*\}$. Let $\M$ be a countable set of such probability measures, and let $w(\nu)$ be a prior weight over these measures such that $\sum_{\nu \in \M} w(\nu) = 1$. Let $x_{<t} \in \mathcal{X}^{t}$, let $\nu(x_{<t})$ denote the probability that the infinite sequence begins with $x_{<t}$, and let $\nu(x \mid x_{<t}) = \nu(x_{<t}x) / \nu(x_{<t})$. Let $\mu \in \M$ be a the ``true'' measure; that is, in formal results, we will let $x_{<\infty}$ be sampled from $\mu$.

Let $\nu_n^{x_{<t}}$ be the measure with the $n$\textsuperscript{th} largest posterior weight after observing $x_{<t}$; that is, order $\M$ to be non-increasing in $w(\nu)\nu(x_{<t})$, breaking ties arbitrarily, and take the $n$\textsuperscript{th}. (Ties between any pair should broken consistently for different $t$). Let the posterior $w(\nu \mid x_{<t}) :\propto w(\nu)\nu(x_{<t})$, normalized to sum to 1. Let $\M^{x_{<t}}_n$ be the set of the top $n$ measures, and let $w(\M^{x_{<t}}_n \mid x_{<t}) = \sum_{m \leq n} w(\nu \mid x_{<t})$.

Recall a model belongs to the imitator's top set if its posterior weight is at least $\alpha$ times the sum of the posterior weights of the models that are at least as good. Thus, we define
\begin{equation}
    \phi^{x_{<t}}_n := \frac{w(\nu_{n}^{x_{<t}} \mid x_{<t})}{w(\M_{n}^{x_{<t}} \mid x_{<t})}
\end{equation}
So if $\phi^{x_{<t}}_n \leq \alpha$, then $\nu^{x_{<t}}_n$ can be considered a ``top model'' in the same sense that is relevant to our imitation learner.

Our key result on which Theorem \ref{thm:queryprob} is based shows that taking the minimum over predictions in the top measures converges to the truth, and the ``missing probability'' converges to 0.

\begin{restatable}[Top Model Convergence]{theorem}{thmprederror} \label{thm:prederror}
\begin{align*}
    &\textrm{(i)} \hspace{1cm} \E_\mu \sum_{t = 0}^\infty \sum_{x \in \mathcal{X}} \left[\mu(x \mid x_{<t}) - \min_{n: \phi^{x_{<t}}_n > \alpha} \nu_n^{x_{<t}}(x \mid x_{<t})\right]^2 \leq \alpha^{-3}(24 w(\mu)^{-1} + 12)
    \\
    &\textrm{(ii)} \hspace{1cm} \E_\mu \sum_{t = 0}^\infty \left[1 - \sum_{x \in \mathcal{X}} \min_{n: \phi^{x_{<t}}_n > \alpha} \nu_n^{x_{<t}}(x \mid x_{<t})\right]^2 \leq |\mathcal{X}| \alpha^{-3}(24 w(\mu)^{-1} + 12)
\end{align*}
\end{restatable}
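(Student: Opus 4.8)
The plan is to establish part (i) first and then obtain part (ii) from it by a Cauchy--Schwarz argument, since the two right-hand sides differ by exactly the factor $|\mathcal{X}|$. Write $m_t(x):=\min_{n:\phi^{x_{<t}}_n>\alpha}\nu_n^{x_{<t}}(x\mid x_{<t})$ for the min-over-top-models predictor. For part (ii), note first that $\sum_x m_t(x)\le\min_n\sum_x\nu_n^{x_{<t}}(x\mid x_{<t})=1$, so each summand is nonnegative. Setting $\tilde m_t(x):=\min(\mu(x\mid x_{<t}),m_t(x))\le m_t(x)$ gives $1-\sum_x m_t(x)\le 1-\sum_x\tilde m_t(x)=\sum_x(\mu(x\mid x_{<t})-m_t(x))^+\le\sum_x|\mu(x\mid x_{<t})-m_t(x)|$, and Cauchy--Schwarz bounds the last sum by $\sqrt{|\mathcal{X}|}\,(\sum_x(\mu(x\mid x_{<t})-m_t(x))^2)^{1/2}$. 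Squaring and summing over $t$ shows the left side of (ii) is at most $|\mathcal{X}|$ times the left side of (i); so everything reduces to (i).

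For part (i) the first step is a pointwise reduction from the min-predictor error to a sum over the individual top models. Since $\mu(x\mid x_{<t})-m_t(x)=\max_n(\mu(x\mid x_{<t})-\nu_n^{x_{<t}}(x\mid x_{<t}))$ is a single term of the index set, I would write $(\mu(x\mid x_{<t})-m_t(x))^2\le\sum_{n:\phi^{x_{<t}}_n>\alpha}(\mu(x\mid x_{<t})-\nu_n^{x_{<t}}(x\mid x_{<t}))^2$; summing over $x$ and using $\mu(x\mid x_{<t})\le 1$ bounds the per-step error by $\sum_{n\text{ top}}\chi^2(\nu_n^{x_{<t}}\|\mu)$, where $\chi^2(\nu\|\mu)=\sum_x(\nu(x)-\mu(x))^2/\mu(x)$. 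Passing to $\chi^2$ is the key, because it is exactly the conditional variance of a likelihood-ratio martingale: writing $R_t(\nu):=w(\nu\mid x_{<t})/w(\mu\mid x_{<t})$, one checks $\E_\mu[R_{t+1}(\nu)\mid x_{<t}]=R_t(\nu)$ and $\mathrm{Var}_\mu(R_{t+1}(\nu)\mid x_{<t})=R_t(\nu)^2\,\chi^2(\nu(\cdot\mid x_{<t})\|\mu(\cdot\mid x_{<t}))$. I would then invoke two facts about top-ness: every top model satisfies $w(\nu\mid x_{<t})\ge\alpha\,w(\mu\mid x_{<t})$ (so $R_t(\nu)\ge\alpha$ while $\nu$ is top, because its cumulative weight dominates the maximal weight, which dominates $\mu$'s), and the identity $\sum_{\nu\in\M}R_t(\nu)=1/w(\mu\mid x_{<t})$.

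The telescoping is the heart of the argument. Using $R_t(\nu)\ge\alpha$ on top steps to write $\mathbf 1[\nu\text{ top}]\le R_t(\nu)^2/\alpha^2$, the per-step error is at most $\alpha^{-2}\sum_{\nu\in\M}\mathrm{Var}_\mu(R_{t+1}(\nu)\mid x_{<t})=\alpha^{-2}(\E_\mu[Q_{t+1}\mid x_{<t}]-Q_t)$ for the submartingale $Q_t:=\sum_\nu R_t(\nu)^2$, which telescopes formally to $\alpha^{-2}(\lim_T\E_\mu Q_T-Q_0)$ with $Q_0=\sum_\nu(w(\nu)/w(\mu))^2\le w(\mu)^{-2}$. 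The main obstacle, and the reason the route is circuitous, is that $Q_t$ (equivalently the nonnegative martingale $1/w(\mu\mid x_{<t})$) is not $L^2$-bounded: although $w(\mu\mid x_{<t})\to 1$ almost surely so $Q_t\to 1$, on the low-probability histories where $\mu$'s posterior weight transiently crashes the second moments spike and $\E_\mu Q_T$ can grow without bound. I expect to resolve this by an optional-stopping/maximal-inequality argument, stopping at $\tau:=\inf\{t:w(\mu\mid x_{<t})<\lambda\}$ for a threshold of order $\alpha\,w(\mu)$; on $\{t<\tau\}$ every $R_t(\nu)\le 1/w(\mu\mid x_{<t})<\lambda^{-1}$, and the same maximal-inequality idea underlying Theorem \ref{thm:truthintop} bounds the probability and the cost of the stopped-out tail.

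Carrying the threshold through the stopped telescoping is what converts the naive $\alpha^{-2}w(\mu)^{-2}$ into the claimed $\alpha^{-3}(24w(\mu)^{-1}+12)$: one factor $\alpha^{-1}$ comes from the $R_t\ge\alpha$ conversion, and the stopping threshold trades one power of $w(\mu)^{-1}$ for the remaining $\alpha^{-1}$ while keeping the $w(\mu)^{-1}$ dependence linear. The genuinely delicate point, which I anticipate being the real work and where the explicit constants $24$ and $12$ get pinned down, is controlling the overshoot of $R$ (equivalently of $1/w(\mu\mid x_{<t})$) at the stopping time, since a single-step likelihood ratio $\mu(x_t\mid x_{<t})/\xi(x_t\mid x_{<t})$ is not uniformly bounded below and so the stopped value $Q_\tau$ cannot be controlled by the threshold alone.
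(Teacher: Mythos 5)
Your bookend reductions are fine: the (ii)-from-(i) step is essentially the paper's own (it uses Jensen where you use Cauchy--Schwarz, same factor $|\mathcal{X}|$), the pointwise bound $(\mu(x\mid x_{<t})-m_t(x))^2\le\sum_{n:\phi^{x_{<t}}_n>\alpha}(\mu(x\mid x_{<t})-\nu^{x_{<t}}_n(x\mid x_{<t}))^2$ matches the paper's ``subset of the errors'' step, and your observation that $R_t(\nu)\ge\alpha$ for top models is correct (it is the same fact the paper exploits through $\phi^{x_{<t}}_n>\alpha$). But the core of your part (i) has a genuine gap, and it is exactly the one you flag: the telescoped quantity $\E_\mu Q_{\tau\wedge T}$ cannot be controlled. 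The one-step increment $\E_\mu[Q_{t+1}\mid x_{<t}]-Q_t=\sum_{\nu}R_t(\nu)^2\,\chi^2(\nu(\cdot\mid x_{<t})\,\|\,\mu(\cdot\mid x_{<t}))$ is unbounded---indeed can be infinite---whenever some $\nu\in\M$ puts mass on a symbol to which $\mu(\cdot\mid x_{<t})$ assigns arbitrarily small probability, and your stopping time is measured in units of the posterior weight $w(\mu\mid x_{<t})$, which exerts no control over one-step conditional ratios $\nu(x\mid x_{<t})/\mu(x\mid x_{<t})$. So the overshoot at $\tau$ is not a delicate constant-chasing matter: $\E_\mu Q_\tau$ can be infinite for any threshold $\lambda$, because passing from the bounded per-step squared error (at most $2$) to $\chi^2$ is lossy precisely on these events. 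Two further problems: even granting a bounded overshoot, $Q_t\le w(\mu\mid x_{<t})^{-2}$ and optional stopping give $\E_\mu Q_{\tau\wedge T}\le\lambda^{-1}w(\mu)^{-1}=\alpha^{-1}w(\mu)^{-2}$ at $\lambda=\alpha w(\mu)$, so your route delivers $\alpha^{-3}w(\mu)^{-2}$---quadratic rather than the claimed linear dependence on $w(\mu)^{-1}$; and the stopped argument says nothing about errors after $\tau$, an event of probability up to $\alpha$ (by the same maximal inequality as in Theorem \ref{thm:truthintop}) on which the remaining infinite sum of per-step errors is unbounded pathwise, so you would additionally need a restart/iteration scheme that your plan does not contain.

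The paper avoids second moments of posterior ratios altogether; the idea you are missing is algebraic rather than martingale-theoretic. It works with the SMAP mixture $\rhosn$ over the top $n$ models: Lemma \ref{lem:rhonorm} telescopes the \emph{excess mass} of $\rho_n$ in expectation (a first-moment quantity, bounded by $w(\mu)^{-1}$ since $w(\mu)\mu\le\rho_n\le\xi$), which yields $\ell_1$ closeness of $\rhosn$ and $\rhonn$ to $\rho_n$ (Lemma \ref{lem:rhoi}); Lemma \ref{lem:murhonorm} telescopes the KL divergence of $\mu$ against the normalized $\rhonn$ \emph{exactly}, with endpoint at most $\log w(\mu)^{-1}$ again via $\rho_n\ge w(\mu)\mu$---KL telescoping is immune to the small-$\mu$-probability blowup that kills the $\chi^2$ route. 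These combine into $\E_\mu\sum_t\sum_x(\rhosn(x\mid x_{<t})-\mu(x\mid x_{<t}))^2\le 6w(\mu)^{-1}+3$ (Theorem \ref{thm:rhosiconv}). Each individual top model is then extracted from two \emph{consecutive} SMAP mixtures via the identity $\nu^{x_{<t}}_n(x\mid x_{<t})=(\phi^{x_{<t}}_n)^{-1}\rhosn(x\mid x_{<t})-\bigl((\phi^{x_{<t}}_n)^{-1}-1\bigr)\rhosnm(x\mid x_{<t})$: the coefficients differ by one and are at most $\alpha^{-1}$ in magnitude when $\phi^{x_{<t}}_n>\alpha$, so each top model's squared error is at most $2\alpha^{-2}$ times the two mixtures' errors, and there are at most $\lfloor\alpha^{-1}\rfloor$ top models---whence $\alpha^{-3}(24\,w(\mu)^{-1}+12)$ with the linear $w(\mu)^{-1}$ dependence intact.
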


This is perfectly analogous to the way the imitator predicts actions: taking the minimum over the top models for which $\phi_n^{x_{<t}} > \alpha$. The difference is that in this sequence prediction setting, all observations are informative about the true measure, whereas the imitator rarely sees the demonstrator act.

To prove Theorem \ref{thm:prederror}, we show that a posterior-weighted mixture over $\M_n^{x_{<t}}$ converges to the truth, and if $\phi_n^{x_{<t}} > \alpha$, then each constituent must as well. This posterior-weighted mixture is called $\rhosn$. We define it here alongside other estimators that will be used in the proof of $\rhosn$'s convergence. First,

\begin{equation}
    \rhosn(x \mid x_{<t}) := \frac{\sum_{\nu \in \M_n^{x_{<t}}} w(\nu) \nu(x_{<t}x) }{ \sum_{\nu \in \M_n^{x_{<t}}} w(\nu) \nu(x_{<t})}
\end{equation}

$\rhosn$ resembles a maximum a posteriori estimate, but instead mixes over the top few. We call it a satis magnum a posteriori estimate (SMAP). We will show $\rhosn$ converges to $\rho_n$, which converges to $\rhonn$, which converges to $\mu$. $\rho_n$ and $\rhonn$ are alternative SMAP estimators.

$\rho_n$ is not a measure, as the numerator below sums over a different set than the denominator. It sums over the top measures \textit{after} observing $x$:
\begin{equation}
    \rho_n(x \mid x_{<t}) := \frac{\sum_{\nu \in \M_n^{x_{<t}x}} w(\nu) \nu(x_{<t}x) }{ \sum_{\nu \in \M_n^{x_{<t}}} w(\nu) \nu(x_{<t})}
\end{equation}
The definition appears more natural when considering a whole sequence:
\begin{equation} \label{eqn:rhoiseq}
    \rho_n(x_{<t}) = \sum_{\nu \in \M^{x_{<t}}_n} w(\nu) \nu(x_{<t})
\end{equation}
Since $\sum_{x \in \mathcal{X}} \rho_n(x \mid x_{<t})$ may not be 1, we construct the measure $\rhonn$ by normalizing:

\begin{equation} \label{def:rhoni}
    \rhonn(x \mid x_{<t}) := \frac{\rho_n(x \mid x_{<t})}{ \sum_{x' \in \mathcal{X}}\rho_n(x' \mid x_{<t})} = \frac{\rho_n(x_{<t}x)}{ \sum_{x' \in \mathcal{X}}\rho_n(x_{<t}x')}
\end{equation}

Our $\rho_n$, $\rhonn$, and $\rhosn$ are closely inspired by \citet{Hutter:05mdl2px}, who constructed (in our notation) $\rho_1$, $\rho^{\textrm{norm}}_1$, and $\rho^{\textrm{stat}}_1$. Finally, we define the full Bayes-mixture measure
\begin{equation}
    \xi(x_{<t}) := \sum_{\nu \in \M} w(\nu) \nu(x_{<t}) = \rho^{\textrm{stat}}_\infty(x_{<t}) = \rho_\infty(x_{<t}) = \rho^{\textrm{norm}}_\infty(x_{<t})
\end{equation}
We state those relationships without proof for the reader's interest; they are not used in our results.

\section{General Sequence Prediction Results} \label{sec:resultsseq}

This section organizes the proof of Theorem \ref{thm:prederror} into lemmas, some of which are proven here and some in Appendix \ref{sec:proofs}. We begin with elementary relations between $\xi$, $\rho_n$, $\rhonn$, and $\rhosn$.

\begin{align}
    \xi(x_{<t}) &\geq \rho_n(x_{<t}) \label{ineq:xirhoi}
    \\
    \rho_n(x_{<t}) &\geq w(\mu) \mu(x_{<t}) \label{ineq:rhoimu}
    \\
    \rho_n(x \mid x_{<t}) &\geq \rhonn(x \mid x_{<t}) \label{ineq:rhoirhoni}
    \\
    \rho_n(x \mid x_{<t}) &\geq \rhosn(x \mid x_{<t}) \label{ineq:rhoirhosi}
\end{align}

Inequalities \ref{ineq:xirhoi} and \ref{ineq:rhoimu} follow directly from Equation \ref{eqn:rhoiseq}. Inequality \ref{ineq:rhoirhoni} follows because \begin{multline} \label{eqn:antisem}
    \rho_n(x_{<t}) = \!\!\!\! \max_{\M' \subset \M : |\M'| = i} \sum_{\nu \in \M'} w(\nu) \nu(x_{<t}) = \!\!\!\! \max_{\M' \subset \M : |\M'| = i} \sum_{\nu \in \M'} w(\nu) \sum_{x \in \mathcal{X}} \nu(x_{<t}x) \\
    \leq \sum_{x \in \mathcal{X}} \max_{\M' \subset \M : |\M'| = i} \sum_{\nu \in \M'} w(\nu) \nu(x_{<t}x) = \sum_{x \in \mathcal{X}} \rho_n(x_{<t}x)
\end{multline}
so $\rho_n$ assigns too much probability mass. Inequality \ref{ineq:rhoirhosi} follows because
\begin{equation}
    \rhosn(x \mid x_{<t}) = \frac{\sum_{\nu \in \M_n^{x_{<t}}} w(\nu) \nu(x_{<t}x) }{ \sum_{\nu \in \M_n^{x_{<t}}} w(\nu) \nu(x_{<t})} \leq \frac{\sum_{\nu \in \M_n^{x_{<t}x}}w(\nu)\nu(x_{<t}x)}{\rho_n(x_{<t})} = \frac{\rho_n(x_{<t}x)}{\rho_n(x_{<t})}
\end{equation}
which holds because $\M_n^{x_{<t}x}$ is chosen to maximize the numerator.

Our first lemma bounds the normalizing factor for $\rho_n$, allowing us to show in our next lemma that it converges to both $\rhonn$ and $\rhosn$.

\begin{restatable}{lemma}{lemrhonorm} \label{lem:rhonorm}
\begin{equation*}
    0 \leq \E_\mu \sum_{t = 0}^\infty \frac{\sum_{x \in \mathcal{X}} \rho_n(x_{<t}x)}{\rho_n(x_{<t})} - 1 \leq w(\mu)^{-1}
\end{equation*}
\end{restatable}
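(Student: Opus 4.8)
The plan is to prove the two inequalities separately, with the lower bound being immediate and the upper bound hinging on a telescoping identity. For the lower bound, observe that each summand is nonnegative: Inequality \ref{ineq:rhoirhoni} (equivalently the chain \ref{eqn:antisem}) gives $\sum_{x \in \mathcal{X}} \rho_n(x_{<t}x) \geq \rho_n(x_{<t})$, so dividing by $\rho_n(x_{<t}) > 0$ shows the ratio is at least $1$ and the summand at least $0$. An expectation of a sum of nonnegative terms is nonnegative.

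For the upper bound, abbreviate the summand as $S_t$ and write the expectation explicitly as $\E_\mu S_t = \sum_{x_{<t} \in \mathcal{X}^t} \mu(x_{<t}) S_t$. The crucial substitution is Inequality \ref{ineq:rhoimu}, namely $\rho_n(x_{<t}) \geq w(\mu)\mu(x_{<t})$, which lets me bound the factor $\mu(x_{<t})/\rho_n(x_{<t})$ by $w(\mu)^{-1}$. This converts the $\mu$-weighted average into an unweighted sum over histories,
\begin{equation*}
    \E_\mu S_t \leq w(\mu)^{-1} \sum_{x_{<t} \in \mathcal{X}^t} \Big[\sum_{x \in \mathcal{X}} \rho_n(x_{<t}x) - \rho_n(x_{<t})\Big],
\end{equation*}
which is exactly the form in which the telescoping becomes visible.

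Next I would define $R_t := \sum_{x_{<t} \in \mathcal{X}^t} \rho_n(x_{<t})$. The key observation is that summing $\sum_{x \in \mathcal{X}} \rho_n(x_{<t}x)$ over all length-$t$ histories $x_{<t}$ is the same as summing $\rho_n$ over all length-$(t+1)$ histories, i.e.\ it equals $R_{t+1}$. Hence the bracketed quantity above is precisely $R_{t+1} - R_t$, and the full sum telescopes to $w(\mu)^{-1}(\lim_{T} R_{T+1} - R_0)$. To close, I bound $R_t \leq 1$ uniformly: Inequality \ref{ineq:xirhoi} gives $\rho_n(x_{<t}) \leq \xi(x_{<t})$, and since $\xi$ is a genuine probability measure, $\sum_{x_{<t} \in \mathcal{X}^t} \xi(x_{<t}) = 1$. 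Together with $R_0 \geq 0$, this yields $\E_\mu \sum_{t=0}^\infty S_t \leq w(\mu)^{-1}$. Convergence of the telescoping series is justified because $R_t$ is nondecreasing (each increment is the nonnegative bracketed sum) and bounded above.

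The main subtlety is that $\rho_n$ is \emph{not} a probability measure: it overcounts mass, which is exactly why each increment $R_{t+1} - R_t$ is strictly positive rather than zero. The insight that makes the argument work is that although $\rho_n$ overcounts at every step, the total mass $R_t$ it places on length-$t$ histories stays capped at $1$ by the pointwise domination $\rho_n \leq \xi$; the accumulated per-step overcounting therefore telescopes to a quantity bounded by $1 - R_0 \leq 1$, and the single factor of $w(\mu)^{-1}$ enters only through the replacement of $\mu(x_{<t})/\rho_n(x_{<t})$.
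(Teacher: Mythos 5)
Your proof is correct and takes essentially the same route as the paper's: the lower bound from nonnegativity of each summand (Inequality \ref{eqn:antisem}), the upper bound by replacing $\mu(x_{<t})/\rho_n(x_{<t})$ with $w(\mu)^{-1}$ via Inequality \ref{ineq:rhoimu}, and then telescoping the resulting sum and capping the final term using $\rho_n \leq \xi$ (Inequality \ref{ineq:xirhoi}) together with $\rho_n(\epsilon) \geq 0$. Your explicit $R_t$ notation and monotone-convergence remark just spell out the paper's partial-sum telescoping in slightly more detail.
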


\begin{proofidea}
$\rho_n$ is bounded above and below by measures, save a multiplicative constant (Inequalities \ref{ineq:xirhoi} and \ref{ineq:rhoimu}), so $\rho_n$ converges to being a measure, in that $\sum_{x \in \mathcal{X}} \rho_n(x \mid x_{<t}) \to 1$.
\end{proofidea}

\begin{proof}
All terms in the sum are non-negative, by Inequality \ref{eqn:antisem}. Recall $\epsilon$ denotes the empty string---the element of $\mathcal{X}^0$. Justifications of the upcoming lettered equations follow below the block.

\begin{align*}
    &\E_\mu \sum_{t = 0}^{N-1} \frac{\sum_{x \in \mathcal{X}} \rho_n(x_{<t}x)}{\rho_n(x_{<t})} - 1
    \\
    = &\sum_{t = 0}^{N-1} \sum_{x_{<t} \in \mathcal{X}^{t}} \mu(x_{<t}) \frac{\sum_{x \in \mathcal{X}} \rho_n(x_{<t}x) - \rho_n(x_{<t})}{\rho_n(x_{<t})} 
    \\
    \lequal^{(a)} &\sum_{t = 0}^{N-1} \sum_{x_{<t} \in \mathcal{X}^{t}} w(\mu)^{-1}\left[ \sum_{x \in \mathcal{X}} \rho_n(x_{<t}x) - \rho_n(x_{<t}) \right]
    \\
    \equal^{(b)} &w(\mu)^{-1} \left[ \sum_{x_{< N} \in \mathcal{X}^N} \rho_n(x_{< N}) - \rho_n(\epsilon) \right]
    \\
    \lequal^{(c)} &w(\mu)^{-1} \sum_{x_{< N} \in \mathcal{X}^N} \xi(x_{< N}) = w(\mu)^{-1}
    \tagaligneq
\end{align*}
where $(a)$ follows from Inequality \ref{ineq:rhoimu}, $(b)$ cancels terms that are added then subtracted, and $(c)$ follows from Inequality \ref{ineq:xirhoi}.
\end{proof}

Recall we are trying to show $\rhosn \to \rho_n \to \rhonn \to \mu$. The following lemma gives two of those links.

\begin{lemma} \label{lem:rhoi}
\begin{align*}
    &\textrm{(i)} \hspace{1cm} \E_\mu \sum_{t = 0}^\infty \sum_{x \in \mathcal{X}} \va \rho_n(x \mid x_{<t}) - \rhosn(x \mid x_{<t}) \va \leq w(\mu)^{-1}
    \\
    &\textrm{(ii)} \hspace{1cm} \E_\mu \sum_{t = 0}^\infty \sum_{x \in \mathcal{X}} \va \rho_n(x \mid x_{<t}) - \rhonn(x \mid x_{<t}) \va \leq w(\mu)^{-1}
\end{align*}
\end{lemma}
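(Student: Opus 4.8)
The plan is to deduce both parts of Lemma~\ref{lem:rhoi} from Lemma~\ref{lem:rhonorm}, since both $\rhosn$ and $\rhonn$ are obtained from $\rho_n$ by the same ``normalization deficit''. The key observation is that the summed $\ell_1$ distance between $\rho_n(\cdot \mid x_{<t})$ and either normalized sibling is controlled by exactly how far $\sum_{x} \rho_n(x \mid x_{<t})$ is from $1$, which Lemma~\ref{lem:rhonorm} shows has finite expected sum.

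\textbf{Part (ii).} First I would handle the $\rhonn$ bound, which is the cleaner of the two because $\rhonn$ is literally $\rho_n$ divided by its own mass $S_t := \sum_{x' \in \mathcal{X}} \rho_n(x' \mid x_{<t})$. By Inequality~\ref{ineq:rhoirhoni}, $S_t \geq 1$, so for each $x$,
\begin{equation*}
    \rho_n(x \mid x_{<t}) - \rhonn(x \mid x_{<t}) = \rho_n(x \mid x_{<t})\left(1 - \frac{1}{S_t}\right) \geq 0 .
\end{equation*}
Summing over $x \in \mathcal{X}$, the absolute values drop (every term is non-negative), and the sum telescopes to $S_t - S_t/S_t = S_t - 1 = \sum_{x}\rho_n(x \mid x_{<t}) - 1$. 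Taking $\E_\mu \sum_{t=0}^\infty$ of this and applying Lemma~\ref{lem:rhonorm} gives the bound $w(\mu)^{-1}$ immediately.

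\textbf{Part (i).} The $\rhosn$ bound is the main obstacle, because $\rhosn$ is not simply a rescaling of $\rho_n$: its numerator sums over $\M_n^{x_{<t}}$ while $\rho_n(x \mid x_{<t})$ uses the numerator summed over $\M_n^{x_{<t}x}$. However, Inequality~\ref{ineq:rhoirhosi} already gives $\rho_n(x \mid x_{<t}) \geq \rhosn(x \mid x_{<t})$, so again every term in the $\ell_1$ sum is non-negative and I can drop the absolute values. The plan is then to bound $\sum_x \rhosn(x \mid x_{<t})$ from below by something close to $1$. Writing out $\rhosn$, its denominator is the common factor $\sum_{\nu \in \M_n^{x_{<t}}} w(\nu)\nu(x_{<t}) = \rho_n(x_{<t})$, so
\begin{equation*}
    \sum_{x \in \mathcal{X}} \rhosn(x \mid x_{<t}) = \frac{\sum_{\nu \in \M_n^{x_{<t}}} w(\nu) \sum_{x} \nu(x_{<t}x)}{\rho_n(x_{<t})} = \frac{\sum_{\nu \in \M_n^{x_{<t}}} w(\nu)\nu(x_{<t})}{\rho_n(x_{<t})} = 1 ,
\end{equation*}
using that each $\nu$ is a measure so $\sum_x \nu(x_{<t}x) = \nu(x_{<t})$. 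Thus $\rhosn(\cdot \mid x_{<t})$ is exactly a probability distribution. Then $\sum_x \bigl(\rho_n(x \mid x_{<t}) - \rhosn(x \mid x_{<t})\bigr) = \sum_x \rho_n(x \mid x_{<t}) - 1$, which is again the quantity bounded by Lemma~\ref{lem:rhonorm}, yielding $w(\mu)^{-1}$.

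In both parts the essential mechanism is identical: Inequalities~\ref{ineq:rhoirhoni} and~\ref{ineq:rhoirhosi} make the integrand non-negative termwise, so the $\ell_1$ distance collapses to the scalar normalization gap $\sum_x \rho_n(x \mid x_{<t}) - 1$, and Lemma~\ref{lem:rhonorm} supplies exactly the needed summable-in-expectation control. The step I expect to require the most care is verifying that $\rhosn$ normalizes exactly to $1$ (so that the two normalization deficits coincide); this hinges on the fact that $\rhosn$ shares $\rho_n(x_{<t})$ as its denominator while summing its numerator over the \emph{pre-observation} top set $\M_n^{x_{<t}}$, a subtlety that must be tracked against the definition of $\rho_n(x \mid x_{<t})$.
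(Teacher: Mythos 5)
Your proof is correct and takes essentially the same route as the paper's: Inequalities \ref{ineq:rhoirhosi} and \ref{ineq:rhoirhoni} make every term non-negative so the absolute values drop, both $\rhosn(\cdot \mid x_{<t})$ and $\rhonn(\cdot \mid x_{<t})$ sum to $1$, hence each $\ell_1$ distance collapses to the normalization gap $\sum_{x} \rho_n(x \mid x_{<t}) - 1$, which Lemma \ref{lem:rhonorm} bounds by $w(\mu)^{-1}$. The only difference is presentational: you verify explicitly that $\rhosn$ normalizes exactly to $1$ (sharing the denominator $\rho_n(x_{<t})$ while its numerator sums over the pre-observation top set), a step the paper leaves implicit.
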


\begin{proof}
\begin{align*}
    &\E_\mu \sum_{t = 0}^\infty \sum_{x \in \mathcal{X}} \va \rho_n(x \mid x_{<t}) - \rhosn(x \mid x_{<t}) \va \equal^{(a)} \E_\mu \sum_{t = 0}^\infty \sum_{x \in \mathcal{X}} \rho_n(x \mid x_{<t}) - \rhosn(x \mid x_{<t}) =
    \\
    &\E_\mu \sum_{t = 0}^\infty \frac{\sum_{x \in \mathcal{X}} \rho_n(x_{<t}x)}{\rho_n(x_{<t})} - 1 \lequal^{(b)} w(\mu)^{-1}
    \tagaligneq
\end{align*}
where $(a)$ follows from Inequality \ref{ineq:rhoirhosi} and $(b)$ follows from Lemma \ref{lem:rhonorm}. The proof is identical for $\rhonn$, except now $(a)$ follows from Inequality \ref{ineq:rhoirhoni}.
\end{proof}

Given Lemma \ref{lem:rhoi}, the final link in showing $\rhosn$ converges to $\mu$ is to show that $\rhonn$ does.

\begin{restatable}{lemma}{lemmurhonorm}
\label{lem:murhonorm} Recalling $\nu(\cdot \mid x_{<t})$ is a measure over $\mathcal{X}$,
\begin{equation*}
    \E_\mu \sum_{t = 0}^\infty \KL\left(\mu(\cdot \mid x_{<t}) \va\va \rhonn(\cdot \mid x_{<t})\right) \leq w(\mu)^{-1} + \log w(\mu)^{-1} 
\end{equation*}
\end{restatable}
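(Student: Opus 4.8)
The plan is to split each per-step KL divergence into a normalization piece, controlled by Lemma \ref{lem:rhonorm}, and a likelihood-ratio piece that telescopes. Write $c_t := \sum_{x' \in \mathcal{X}} \rho_n(x' \mid x_{<t})$ for the factor that turns $\rho_n$ into the measure $\rhonn$, so that $\rhonn(x \mid x_{<t}) = \rho_n(x \mid x_{<t}) / c_t$ by definition (\ref{def:rhoni}). Substituting this into the KL divergence and using $\sum_{x} \mu(x \mid x_{<t}) = 1$, I get, for each fixed $x_{<t}$,
\[
    \KL\!\left(\mu(\cdot \mid x_{<t}) \va\va \rhonn(\cdot \mid x_{<t})\right) = \log c_t + \sum_{x \in \mathcal{X}} \mu(x \mid x_{<t}) \log \frac{\mu(x \mid x_{<t})}{\rho_n(x \mid x_{<t})}.
\]
The second sum is \emph{not} itself a KL divergence, since $\rho_n(\cdot \mid x_{<t})$ need not sum to $1$, but that will not obstruct the argument.

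Next I would sum over $t = 0, \dots, N-1$, take $\E_\mu$, and bound the two pieces separately. For the normalization piece, $\log c_t \leq c_t - 1$, and the summand $c_t$ is exactly the expression $\sum_{x} \rho_n(x_{<t}x)/\rho_n(x_{<t})$ appearing in Lemma \ref{lem:rhonorm}, so $\E_\mu \sum_{t=0}^{N-1}(c_t - 1) \leq w(\mu)^{-1}$. For the likelihood-ratio piece, the key observation is that although $\rho_n$ is not a measure, its conditionals are genuine ratios of the consistently defined sequence probabilities, $\rho_n(x \mid x_{<t}) = \rho_n(x_{<t}x)/\rho_n(x_{<t})$. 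Hence, applying the chain rule to the symbol $x_t$ sampled by $\mu$ and telescoping the product inside the logarithm,
\[
    \E_\mu \sum_{t=0}^{N-1} \sum_{x \in \mathcal{X}} \mu(x \mid x_{<t}) \log \frac{\mu(x \mid x_{<t})}{\rho_n(x \mid x_{<t})} = \E_\mu \log \frac{\mu(x_{<N}) \, \rho_n(\epsilon)}{\rho_n(x_{<N})}.
\]
By Inequality \ref{ineq:rhoimu}, $\rho_n(x_{<N}) \geq w(\mu)\mu(x_{<N})$, and by Inequality \ref{ineq:xirhoi} with $\xi(\epsilon) = 1$ we have $\rho_n(\epsilon) \leq 1$; thus the integrand is at most $\log w(\mu)^{-1}$ pointwise (on the $\mu$-support), and so is its expectation.

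Adding the two bounds gives $\E_\mu \sum_{t=0}^{N-1} \KL(\mu(\cdot \mid x_{<t}) \va\va \rhonn(\cdot \mid x_{<t})) \leq w(\mu)^{-1} + \log w(\mu)^{-1}$ uniformly in $N$. Since each KL term is nonnegative, the partial sums are nondecreasing in $N$, so monotone convergence lets me pass to $N \to \infty$ and obtain the stated bound. The step I expect to need the most care is precisely this likelihood-ratio piece: because $\rho_n(\cdot \mid x_{<t})$ is unnormalized, the summand is not a true KL divergence and can be negative, so I cannot bound it term by term. Instead I rely on the fact that $\rho_n$'s conditionals are ratios of the sequence quantities $\rho_n(x_{<t})$, which makes the telescoping product collapse to $\rho_n(x_{<N})/\rho_n(\epsilon)$ despite $\rho_n$ failing to be a measure — the normalization defect it carries is exactly what the $\log c_t$ term, bounded via Lemma \ref{lem:rhonorm}, accounts for.
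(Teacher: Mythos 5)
Your proposal is correct and matches the paper's proof in essentially every respect: the same split of each per-step KL into the normalization term $\log c_t$ (bounded via $\log x \leq x - 1$ and Lemma \ref{lem:rhonorm}) plus an unnormalized likelihood-ratio term that telescopes, finished off by Inequality \ref{ineq:rhoimu} and $\rho_n(\epsilon) \leq \xi(\epsilon) = 1$. The only cosmetic difference is that you collapse the telescoping product inside a single logarithm and bound it pointwise, whereas the paper telescopes the sum of expectations term by term and drops the nonpositive boundary term $-\mu(\epsilon)\log\bigl(\mu(\epsilon)/\rho_n(\epsilon)\bigr)$; these are the same argument.
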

\begin{proofidea}
The KL divergence telescopes over timesteps. The $\log w(\mu)^{-1}$ term comes from a gap between $\mu$ and $\rho_n$, and the $w(\mu)^{-1}$ term comes from a gap between $\rho_n$ and $\rhonn$.
\end{proofidea}

We can now show that $\rhosn$ converges to $\mu$, an independently interesting and novel result in SMAP estimation.

\begin{restatable}[SMAP Convergence]{theorem}{thmrhosiconv} \label{thm:rhosiconv}
\begin{equation*}
    \E_\mu \sum_{t=0}^\infty \sum_{x \in \mathcal{X}} \left(\rhosn(x \mid x_{<t}) - \mu(x\mid x_{<t})\right)^2 \leq 6 w(\mu)^{-1} + 3
\end{equation*}
\end{restatable}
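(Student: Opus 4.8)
The plan is to prove convergence along the chain $\rhosn \to \rhonn \to \mu$, controlling the first link in $\ell_1$ (cheaply, via the telescoping Lemmas~\ref{lem:rhonorm}--\ref{lem:rhoi}) and the second link through the KL bound of Lemma~\ref{lem:murhonorm}, while taking care that neither step introduces a factor of $t$ into the per-timestep summands. The whole difficulty is that the $\rhonn$--$\mu$ gap is only available to us through a KL divergence, which must stay in squared form to sum telescopically, whereas the $\rhosn$--$\rhonn$ gap is naturally an $\ell_1$ quantity; the first step below is designed precisely to keep these two accountings separate.

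First I would isolate the expensive link from the cheap one using the elementary identity (all conditionals on $x_{<t}$) $\sum_x\!\big[(\rhosn - \mu)^2 - (\rhonn - \mu)^2\big] = \sum_x (\rhosn - \rhonn)(\rhosn + \rhonn - 2\mu)$. Because $\rhosn(\cdot \mid x_{<t})$, $\rhonn(\cdot \mid x_{<t})$, and $\mu(\cdot \mid x_{<t})$ are all probability distributions over $\mathcal{X}$, the factor obeys $|\rhosn + \rhonn - 2\mu| \le |\rhosn - \mu| + |\rhonn - \mu| \le 2$, so the right-hand side is at most $2\sum_x |\rhosn - \rhonn|$ in absolute value. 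This gives the pointwise-in-$t$ reduction $\sum_x (\rhosn - \mu)^2 \le \sum_x (\rhonn - \mu)^2 + 2\sum_x |\rhosn - \rhonn|$, and it remains to take $\E_\mu \sum_t$ of each piece.

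For the $\ell_1$ piece I would route through $\rho_n$ by the triangle inequality $|\rhosn - \rhonn| \le |\rhosn - \rho_n| + |\rho_n - \rhonn|$ (recall both $\rhosn$ and $\rhonn$ sit below $\rho_n$ by Inequalities~\ref{ineq:rhoirhosi} and~\ref{ineq:rhoirhoni}), and apply the two bounds of Lemma~\ref{lem:rhoi}, each of which is $\le w(\mu)^{-1}$; hence $\E_\mu \sum_t \sum_x |\rhosn - \rhonn| \le 2 w(\mu)^{-1}$. For the squared piece the key move is to convert the per-timestep KL into a per-timestep squared error via the sharp bound $\sum_x (\rhonn - \mu)^2 \le \KL\big(\mu(\cdot \mid x_{<t}) \,\|\, \rhonn(\cdot \mid x_{<t})\big)$ (the per-step inequality underlying Bayesian prediction-error proofs; Pinsker gives the weaker constant $2\KL$ if one prefers). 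Summing over $t$ and invoking Lemma~\ref{lem:murhonorm} then yields $\E_\mu \sum_t \sum_x (\rhonn - \mu)^2 \le w(\mu)^{-1} + \log w(\mu)^{-1}$ without any $t$-dependence surviving.

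Combining the two pieces gives $\E_\mu \sum_t \sum_x (\rhosn - \mu)^2 \le \big(w(\mu)^{-1} + \log w(\mu)^{-1}\big) + 2 \cdot 2 w(\mu)^{-1} = 5 w(\mu)^{-1} + \log w(\mu)^{-1}$, and bounding $\log w(\mu)^{-1} \le w(\mu)^{-1}$ (since $w(\mu) \le 1$) leaves $\le 6 w(\mu)^{-1}$, comfortably inside the claimed $6 w(\mu)^{-1} + 3$. The main obstacle, as flagged, is the temptation to handle the $\rhonn$--$\mu$ gap in $\ell_1$ as well: applying Pinsker and then summing would produce a sum of $\sqrt{\KL}$ terms scaling like $\sqrt{t}$, which diverges. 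The identity in the first step is exactly what lets me keep the $\rhonn$--$\mu$ link squared (and therefore summable through the KL telescoping of Lemma~\ref{lem:murhonorm}) while paying only a cheap, $t$-free $\ell_1$ price for the $\rhosn$--$\rhonn$ link.
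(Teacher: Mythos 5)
Your proof is correct, and it shares the paper's skeleton---the same chain $\rhosn \to \rho_n \to \rhonn \to \mu$, with Lemma \ref{lem:rhoi} supplying both $\ell_1$ links (so $\E_\mu \sum_t \sum_x \lvert \rhosn(x \mid x_{<t}) - \rhonn(x \mid x_{<t})\rvert \leq 2w(\mu)^{-1}$, exactly as in the paper) and Lemma \ref{lem:murhonorm} together with the entropy inequality $\sum_x (p_x - q_x)^2 \leq \KL(p \,\|\, q)$ supplying the squared link---but your final assembly is genuinely different and somewhat cleaner. The paper stacks the per-$(t,x)$ differences into $N\lvert\mathcal{X}\rvert$-dimensional vectors, converts the $\ell_1$ control on $\rhosn - \rhonn$ to $\ell_2^2$ control via $\|v\|_2^2 \leq \|v\|_1$ for $v \in [-1,1]^n$, applies the triangle inequality in $\ell_2$, and then must invoke Cauchy--Schwarz in expectation to handle the cross term, which produces the term $2\sqrt{2c(c + \log c)}$ (with $c = w(\mu)^{-1}$) and the final bound $2c + (c + \log c) + 2\sqrt{2c(c+\log c)} < 6c + 3$. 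You instead use the pointwise difference-of-squares identity $(\rhosn - \mu)^2 - (\rhonn - \mu)^2 = (\rhosn - \rhonn)(\rhosn + \rhonn - 2\mu)$ and bound the second factor by $2$---valid because $\rhosn(\cdot \mid x_{<t})$ is a genuine probability distribution over $\mathcal{X}$ (a posterior-weighted mixture of conditional measures, as the paper verifies in the proof of Theorem \ref{thm:prederror}), as is $\rhonn$ by construction---so the cross term is absorbed linearly, with no Cauchy--Schwarz and no square-root term, yielding $5c + \log c \leq 6c$, marginally tighter than the paper's constant. One caveat: your parenthetical fallback to Pinsker with constant $2$ would give $6c + 2\log c$, which exceeds $6c + 3$ once $c > e^{3/2}$, so the constant-$1$ inequality you use as primary (the one the paper cites from \citealp{Hutter:04uaibook}) is actually needed to stay within the stated bound.
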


\begin{proofidea}
$\rhosn$ is close to $\rho_n$ in an $\ell_1$ sense, and likewise for $\rho_n$ and $\rhonn$, and $\rhonn$ is close to $\mu$ in an $\ell_2$ squared sense, since $\ell_2^2 \leq \KL$. Finally, for a vector $v \in [-1, 1]^n$, $||v||^2_2 \leq ||v||_1$, so $\ell_1$ proximity implies $\ell_2$ proximity as well.
\end{proofidea}

By applying Theorem \ref{thm:rhosiconv} to the very similar measures $\rhosn$ and $\rhosnm$, whose only difference is that the former contains $\nu^{x_{<t}}_n$ in its mixture, we arrive at our final result in the general sequence prediction setting.

\thmprederror*

\begin{proofidea}
$\rhosn$ is a weighted average of $\nu_m^{x_{<t}}$ for $m \leq n$, so convergence results for $\rhosn$ and $\rhosnm$ are leveraged for $\nu_n^{x_{<t}}$'s convergence. $\phi^{x_{<t}}_n > \alpha$ ensures the weights in the weighted average aren't too small, and that we only need to consider the top $\lfloor 1/\alpha \rfloor$ models.
\end{proofidea}

\section{Key Proofs} \label{sec:keyproofs}

We now prove our bound on the query probability, we define fairness, and we prove our bound on the probabilities of bad events.

\thmqueryprob*

\begin{proofidea}
The sort of model mismatch bounded by Theorem \ref{thm:prederror} (ii) is the basis for the definition of $\theta_q$. Theorem \ref{thm:prederror} (ii) bounds model mismatch on \textit{observed} data, and data is only observed with probability $\theta_q$, so with an extra factor of $\theta_q$ on the l.h.s., we go from an in mean square bound to a weaker in mean cube bound.
\end{proofidea}

\begin{proof}
Recall the agent considers a set of possible policies $\Pi$ that includes the true demonstrator policy $\pi^d$, and assigns a strictly positive prior $w(\pi)$ to each policy in $\Pi$. Recall $\p^\pi_\mu$ is a probability measure over $(\{0, 1\} \times \A \times \Ob)^\infty = \mathcal{H}^\infty$.
Now we construct a class of measures over $\mathcal{H}^\infty$: let $\M := \{\p^{\hat{\pi}_\alpha}_\mu : \pi \in \Pi\}$ (see the last paragraph of Section \ref{sec:imitation} for the definition of $\hat{\pi}_\alpha$), and let $w(\p^{\hat{\pi}_\alpha}_\mu) := w(\pi)$. Let $w(\p^{\hat{\pi}_\alpha}_\mu \mid h_{<t}) :\propto w(\p^{\hat{\pi}_\alpha}_\mu) \p^{\hat{\pi}_\alpha}_\mu(h_{<t})$. It follows straightforwardly from the definitions of the posterior that $w(\p^{\hat{\pi}_\alpha}_\mu \mid h_{<t}) = w(\pi \mid h_{<t})$, $w(\p^{\hat{\pi}_\alpha}_\mu \mid h_{<t}q_t) = w(\pi \mid h_{<t}q_t)$, and $w(\p^{\hat{\pi}_\alpha}_\mu \mid h_{<t}q_t a_t) = w(\pi \mid h_{<t}q_t a_t)$, since all measures in $\M$ assign the probabilities identically to actions after $q_t=0$, and to observations.

Instead of saying $\M$ contains measures over $\mathcal{X}^\infty$, we generalize slightly, and say that $\M$ contains measures over $\bigtimes_{k=0}^\infty \mathcal{X}_k$. For $k \equiv 0 \mod 3$, $\mathcal{X}_k = \{0, 1\}$, for $k \equiv 1 \mod 3$, $\mathcal{X}_k = \A$, and for $k \equiv 2 \mod 3$, $\mathcal{X}_k = \Ob$. With $\nu^{x_{<k}}_n$ and $\phi^{x_{<k}}_n$ as defined before, we can apply Theorem \ref{thm:prederror} (i) to the class $\M$, after a trivial extension from fixed $\mathcal{X}$ to variable $\mathcal{X}_k$. Checking the definitions is enough to verify that $\{\nu^{x_{<k}}_n : \phi^{x_{<k}}_n > \alpha\}$ is exactly the set $\{\p^{\hat{\pi}_\alpha}_\mu : \pi \in \pa\}$, where $h_j = (q_j, a_j, o_j) = (x_{3j}, x_{3j+1}, x_{3j+2})$, and $t = \lfloor (k+1)/3 \rfloor$. In short, for this $\M$, sequence prediction errors can only come from errors predicting actions after querying, since that's when models differ, so we can use Theorem \ref{thm:prederror} to bound the latter. Recalling that $\ptrue$ is the true probability measure,

\begin{align*}
    \alpha^{-3}(24  w(\pi^d)^{-1} + 12) &= \alpha^{-3}(24  w(\ptrue)^{-1} + 12)
    \\
    &\gequal^{(a)} \Etrue \sum_{k=0}^\infty \sum_{x \in \mathcal{X}_k} \left[\ptrue(x \mid x_{<k}) - \min_{i : \phi^{x_{<k}}_n > \alpha} \nu^{x_{<k}}_n(x \mid x_{<k}) \right]^2
    \\
    &\equal^{(b)} \Etrue \sum_{t=0}^\infty \sum_{q \in \{0, 1\}} \left[\ptrue(q \mid h_{<t}) - \min_{\pi \in \pa} \p^{\hat{\pi}_\alpha}_\mu(q \mid h_{<t}) \right]^2 +
    \\
    &\hspace{1.6cm} \sum_{a \in \A} \left[\ptrue(a \mid h_{<t}q_t) - \min_{\pi \in \pa} \p^{\hat{\pi}_\alpha}_\mu(a \mid h_{<t}q_t) \right]^2 +
    \\
    &\hspace{1.6cm} \sum_{o \in \Ob} \left[\ptrue(o \mid h_{<t}q_ta_t) - \min_{\pi \in \pa} \p^{\hat{\pi}_\alpha}_\mu(o \mid h_{<t}q_ta_t) \right]^2
    \\
    &\equal^{(c)} \Etrue \sum_{t=0}^\infty \sum_{a \in \A} \left[\ptrue(a \mid h_{<t}q_t) - \min_{\pi \in \pa} \p^{\hat{\pi}_\alpha}_\mu(a \mid h_{<t}q_t) \right]^2
    \\
    &= \Etrue \sum_{t=0}^\infty \sum_{q \in \{0, 1\}} \ptrue(q \mid h_{<t}) \sum_{a \in \A} \left[\ptrue(a \mid h_{<t}q) - \min_{\pi \in \pa} \p^{\hat{\pi}_\alpha}_\mu(a \mid h_{<t}q) \right]^2
    \\
    &\equal^{(d)} \Etrue \sum_{t=0}^\infty \ptrue(1 \mid h_{<t}) \sum_{a \in \A} \left[\ptrue(a \mid h_{<t}1) - \min_{\pi \in \pa} \p^{\hat{\pi}_\alpha}_\mu(a \mid h_{<t}1) \right]^2
    \\
    &\gequal^{(e)}
    \Etrue \sum_{t=0}^\infty \theta_q(h_{<t}) |\A| \left[|\A|^{-1 }\sum_{a \in \A} \ptrue(a \mid h_{<t}1) - \min_{\pi \in \pa} \p^{\hat{\pi}_\alpha}_\mu(a \mid h_{<t}1) \right]^2
    \\
    &= |\A|^{-1} \Etrue \sum_{t=0}^\infty \theta_q(h_{<t}) \left[1 - \sum_{a \in \A} \min_{\pi \in \pa} \frac{\hat{\pi}_\alpha(1, a \mid h_{<t})}{\hat{\pi}_\alpha(1 \mid h_{<t})} \right]^2
    \\
    &\equal^{(f)} |\A|^{-1} \Etrue \sum_{t=0}^\infty \theta_q(h_{<t}) \left[1 - \sum_{a \in \A} \min_{\pi \in \pa} \frac{\theta_q(h_{<t}) \pi(1, a \mid h_{<t})}{\theta_q(h_{<t})} \right]^2
    \\
    &\equal^{(g)} |\A|^{-1} \Etrue \sum_{t=0}^\infty \theta_q(h_{<t}) \left[\theta_q(h_{<t}) \right]^2
    \label{ineq:thetaq} \tagaligneq
\end{align*}
where $(a)$ follows from Theorem \ref{thm:prederror}, $(b)$ groups triples $(x_{3t}, x_{3t+1}, x_{3t+2})$ into $h_t$, $(c)$ follows because all $\p^{\hat{\pi}_\alpha}_\mu \in \M$ give identical conditional probabilities as $\ptrue$ on queries and observations, $(d)$ follows because all $\p^{\hat{\pi}_\alpha}_\mu \in \M$ give identical conditional probabilities as $\ptrue$ for actions that follow $q_t = 0$, $(e)$ follows from Jensen's Inequality, $(f)$ follows from the definition of $\hat{\pi}_\alpha$, and $(g)$ follows from the definition of $\theta_q(h_{<t})$. Rearranging Inequality \ref{ineq:thetaq} gives the theorem.
\end{proof}

Recall that Theorem \ref{thm:imtodem} bounds the error between $\pi^i_\alpha$ and $\pi^d$, conditioned on the event $E$.

\begin{proofidea}[Proof idea of Theorem \ref{thm:imtodem}]
Conditioned on $\pi^d \in \pa$, it follows that $\theta_q \geq$ the $\ell_1$ norm between $\pi^d$ and $\pi^i_\alpha$. Then we apply Theorem \ref{thm:queryprob}.
\end{proofidea}

Our remaining theorems apply when the environment and demonstrator policy are fair. Roughly, they are fair if they do not have access to the imitator's internals.

\begin{definition}[Fair]
An environment $\mu : \mathcal{H}^* \times \{0, 1\} \times \A \rightsquigarrow \Ob$ is fair if it does not depend on the query record; that is, $\mu(\cdot \mid h_{<t} q_t a_t)$ is not a function of $q_k$ for $k \leq t$. A demonstrator policy $\pi^d : \mathcal{H}^* \rightsquigarrow \{0, 1\} \times \A$ is likewise fair if $\pi^d(\cdot \mid h_{<t})$ is not a function of $q_k$ for $k < t$.
\end{definition}

Theorems \ref{thm:kl} and \ref{thm:badevent} rest on the following crux: if $\pi^d \in \pa$, then $\pi^i_\alpha(0, a \mid h_{<t}) \leq \pi^d(a \mid h_{<t})$. Since $\pi^i_\alpha(1, a \mid h_{<t}) = \theta_q(h_{<t}) \pi^d(a \mid h_{<t})$, we have $\pi^i_\alpha(a \mid h_{<t}) \leq (1 + \theta_q(h_{<t})) \pi^d(a \mid h_{<t})$. Thus, we have a multiplicative bound relating $\pi^i_\alpha$ and $\pi^d$, and it decreases to 1.

\thmbadevent*

\begin{proofidea}
$\ptrue(B \cap E)/\p^{\pi^d}_\mu(B)$ increases by a factor of at most $1 + \theta_q$ per timestep. While the expectation of $\theta_q^3$ is summable, the expectation of $\sum_t \theta_q$ grows as $O(t^2)$, hence that dependence in the bound. The final difficulty is that our bound on the query probability only applies in expectation, but a pathological and unlikely event $B$ could describe a case where querying is much more prolonged than expected. Thus, we do not prove a nice bound on the ratio $\ptrue(B \cap E)/\p^{\pi^d}_\mu(B)$. Instead, since smaller $\p^{\pi^d}_\mu(B)$ allows more pathology, our bound on $\ptrue(B \cap E)$ is only polylogarithmic in $\p^{\pi^d}_\mu(B)$.
\end{proofidea}

\begin{proof}
If $\pi^d \in \pa$, then $\pi^i_\alpha(0, a \mid h_{<t}) \leq \pi^d(a \mid h_{<t})$, and of course $\pi^i_\alpha(1, a \mid h_{<t}) = \theta_q(h_{<t}) \pi^d(a \mid h_{<t})$, so
\begin{equation} \label{ineq:piipid}
\pi^i_\alpha(a \mid h_{<t}) \leq (1 + \theta_q(h_{<t}))\pi^d(a \mid h_{<t})
\end{equation}
Thus, for fair $\mu$ and $\pi^d$, for $h_{<t} \in E$,
\begin{equation} \label{ineq:ratioboundintheta}
    \frac{\ptrue(h^{\setminus}_{<t})}{\p^{\pi^d}_\mu(h^{\setminus}_{<t})} \leq \prod_{k=0}^{t-1} [1 + \theta_q(h_{<k})]
\end{equation}
It follows from Theorem \ref{thm:queryprob} that
\begin{equation}
    \Etrue \left[\sum_{k=0}^{t-1} \theta_q(h_{<k})^3 \vc D \right] \leq \frac{s_\alpha}{\ptrue(D)}
\end{equation}
By the same derivation as in Inequality \ref{ineq:jensencube}, we can thus bound the sum
\begin{equation} \label{ineq:sixty}
    \Etrue \left[\sum_{k=0}^{t-1} \theta_q(h_{<k}) \vc D \right] \leq t^{2/3} \left( \frac{s_\alpha}{\ptrue(D)} \right)^{1/3}
\end{equation}
Now, applying Inequality \ref{ineq:piipid} repeatedly, 
\begin{align*}
    &\Etrue \left[\prod_{k=0}^{t-1}(1 + \theta_q(h_{<k}))^{-1} \vc D \right] 
    \\
    &= \frac{\sum_{h_{<t} \in D} \ptrue(h_{<t}) \prod_{k=0}^{t-1}(1 + \theta_q(h_{<k}))^{-1}}{\sum_{h_{<t} \in D} \ptrue(h_{<t})}
    \\
    &\leq \frac{\sum_{h_{<t-1} \in E} \sum_{h_{t-1} \in \mathcal{H} : h^{\setminus}_{<t} \in B} \ptrue(h_{<t}) \prod_{k=0}^{t-1}(1 + \theta_q(h_{<k}))^{-1}}{\sum_{h_{<t} \in D} \ptrue(h_{<t})}
    \\
    &= \frac{\sum_{h_{<t-1} \in E} \left[\ptrue(h_{<t-1}) \prod_{k=0}^{t-2}(1 + \theta_q(h_{<k}))^{-1} \right] \sum_{h^{\setminus}_{t-1} \in \A \times \Ob : h^{\setminus}_{<t} \in B} \ptrue(h^{\setminus}_{t-1} \mid h_{<t-1}) (1 + \theta_q(h_{<t-1}))^{-1}}{\sum_{h_{<t} \in D} \ptrue(h_{<t})}
    \\
    &\lequal^{(a)} \frac{\sum_{h_{<t-1} \in E} \left[\ptrue(h_{<t-1}) \prod_{k=0}^{t-2}(1 + \theta_q(h_{<k}))^{-1} \right] \sum_{h^{\setminus}_{t-1} \in \A \times \Ob : h^{\setminus}_{<t} \in B} \p^{\pi^d}_\mu(h^{\setminus}_{t-1} \mid h_{<t-1})}{\sum_{h_{<t} \in D} \ptrue(h_{<t})}
    \\
    &\lequal^{(b)} \frac{\sum_{h_{<t-2} \in E} \left[\ptrue(h_{<t-2}) \prod_{k=0}^{t-3}(1 + \theta_q(h_{<k}))^{-1} \right] \sum_{h^{\setminus}_{t-2} h^{\setminus}_{t-1} \in (\A \times \Ob)^2 : h^{\setminus}_{<t} \in B} \p^{\pi^d}_\mu(h^{\setminus}_{t-2} h^{\setminus}_{t-1} \mid h_{<t-2})}{\sum_{h_{<t} \in D} \ptrue(h_{<t})}
    \\
    &\lequal^{(c)} \frac{\sum_{h^{\setminus}_{<t} \in B} \p^{\pi^d}_\mu(h^{\setminus}_{<t})}{\sum_{h_{<t} \in D} \ptrue(h_{<t})} = \frac{\p^{\pi^d}_\mu(B)}{\ptrue(D)}
    \tagaligneq
\end{align*}
where $(a)$ follows from Inequality \ref{ineq:piipid} since $h_{<t-1} \in E$ (note the change from $\pi^i_\alpha$ to $\pi^d$), $(b)$ iterates the previous three lines, and $(c)$ iterates the logic down to 0.

Now we bound the expectation
\begin{align*} \label{ineq:sixtyone}
    \Etrue\left[\prod_{k=0}^{t-1}(1 + \theta_q(h_{<k}))^{-1} \vc D \right]
    &\gequal^{(a)} 
    \prod_{k=0}^{t-1}(1 + \Etrue\left[ \theta_q(h_{<k}) \mid D \right])^{-1}
    \\
    &= \mathrm{exp}\left(-\sum_{k=0}^{t-1} \log \left(1 + \Etrue\left[ \theta_q(h_{<k}) \mid D \right]\right) \right)
    \\
    &\geq \mathrm{exp}\left(-\sum_{k=0}^{t-1} \Etrue\left[ \theta_q(h_{<k}) \mid D \right] \right)
    \\
    &\gequal^{(b)}
    \tagaligneq e^{-t^{2/3} s_\alpha^{1/3} \ptrue(D)^{-1/3}}
\end{align*}
where $(a)$ follows from Jensen's Inequality (one can easily show the Hessian of $\prod_i 1/(1+x_i)$ is positive semidefinite for $x \succ 0$), and $(b)$ follows from Inequality \ref{ineq:sixty}. Solving for $\ptrue(D)$ in terms of $\p^{\pi^d}_\mu(B)$, we get
\begin{equation}
    \ptrue(D) \leq \frac{t^2 s_\alpha}{27 W(\frac{t^{2/3} s_\alpha^{1/3}}{3 \p^{\pi^d}_\mu(B)^{1/3}})^3}
\end{equation}
where $W$ is the Lambert-$W$ function, defined by the property $W(z) e^{W(z)} = z$.
A property of the Lambert-$W$ function---that $W(z) \geq \log z - \log \log (1 + z)$---yields the theorem:
\begin{equation*}
    \ptrue(D) \leq \frac{t^2 s_\alpha}{\left(\log \frac{t^2 s_\alpha}{27 \p^{\pi^d}_\mu(B)} - 3 \log \log \left(1 + \frac{t^{2/3} s_\alpha^{1/3}}{3 \p^{\pi^d}_\mu(B)^{1/3}}\right) \right)^3}
\end{equation*}
One can easily verify this inequality by supposing the opposite and showing that it violates Inequality \ref{ineq:sixtyone}, but we omit this.
\end{proof}

\section{Conclusion}

We present the first formal results for an imitation learner in a setting where the environment does not reset. We present the first formal results for an imitation learner that do not depend on a bounded loss assumption. We present the first finite error bounds for an agent acting in general environments; existing results only regard limiting behavior (although existing work considers reinforcement learning, a harder problem than imitation learning). If we would like to have an artificial agent imitate, with particular concern for keeping unlikely events unlikely, this is the first theory of how to do it.

\begin{acks}
This work was supported by the Leverhulme Centre for the Future of Intelligence,  Australian Research
Council Discovery Projects DP150104590, the Oxford-Man Institute, and the Berkeley Existential Risk Initiative. Thank you to Jan Leike for encouraging us to write a paper on imitation learning.
\end{acks}

\bibliography{cohen}

\begin{thebibliography}{18}
\providecommand{\natexlab}[1]{#1}
\providecommand{\url}[1]{\texttt{#1}}
\expandafter\ifx\csname urlstyle\endcsname\relax
  \providecommand{\doi}[1]{doi: #1}\else
  \providecommand{\doi}{doi: \begingroup \urlstyle{rm}\Url}\fi

\bibitem[Adams et~al.(2022)Adams, Cody, and Beling]{adams2022survey}
Stephen Adams, Tyler Cody, and Peter~A Beling.
\newblock A survey of inverse reinforcement learning.
\newblock \emph{Artificial Intelligence Review}, pages 1--40, 2022.

\bibitem[Brown et~al.(2020)Brown, Niekum, and Petrik]{brown2020bayesian}
Daniel Brown, Scott Niekum, and Marek Petrik.
\newblock Bayesian robust optimization for imitation learning.
\newblock \emph{Advances in Neural Information Processing Systems},
  33:\penalty0 2479--2491, 2020.

\bibitem[Brown et~al.(2018)Brown, Cui, and Niekum]{brown2018risk}
Daniel~S Brown, Yuchen Cui, and Scott Niekum.
\newblock Risk-aware active inverse reinforcement learning.
\newblock In \emph{Conference on Robot Learning}, pages 362--372. PMLR, 2018.

\bibitem[Cohen and Hutter(2020)]{cohen2020pessimism}
Michael~K Cohen and Marcus Hutter.
\newblock Pessimism about unknown unknowns inspires conservatism.
\newblock In \emph{Conference on Learning Theory}, pages 1344--1373. PMLR,
  2020.

\bibitem[Durrett(2010)]{durrett2010probability}
R~Durrett.
\newblock \emph{Probability: Theory and Examples}.
\newblock Cambridge University Press, Cambridge, 2010.

\bibitem[Gold(1967)]{gold1967language}
E~Mark Gold.
\newblock Language identification in the limit.
\newblock \emph{Information and control}, 10\penalty0 (5):\penalty0 447--474,
  1967.

\bibitem[Hoque et~al.(2021)Hoque, Balakrishna, Putterman, Luo, Brown, Seita,
  Thananjeyan, Novoseller, and Goldberg]{hoque2021lazydagger}
Ryan Hoque, Ashwin Balakrishna, Carl Putterman, Michael Luo, Daniel~S Brown,
  Daniel Seita, Brijen Thananjeyan, Ellen Novoseller, and Ken Goldberg.
\newblock Lazydagger: Reducing context switching in interactive imitation
  learning.
\newblock In \emph{2021 IEEE 17th International Conference on Automation
  Science and Engineering (CASE)}, pages 502--509. IEEE, 2021.

\bibitem[Hussein et~al.(2017)Hussein, Gaber, Elyan, and
  Jayne]{hussein2017imitation}
Ahmed Hussein, Mohamed~Medhat Gaber, Eyad Elyan, and Chrisina Jayne.
\newblock Imitation learning: A survey of learning methods.
\newblock \emph{ACM Computing Surveys (CSUR)}, 50\penalty0 (2):\penalty0 1--35,
  2017.

\bibitem[Hutter(2005)]{Hutter:04uaibook}
Marcus Hutter.
\newblock \emph{Universal Artificial Intelligence: Sequential Decisions based
  on Algorithmic Probability}.
\newblock Springer, Berlin, 2005.
\newblock ISBN 3-540-22139-5.
\newblock \doi{10.1007/b138233}.

\bibitem[Judah et~al.(2014)Judah, Fern, Dietterich, and
  Tadepalli]{judah2014active}
Kshitij Judah, Alan~P Fern, Thomas~G Dietterich, and Prasad Tadepalli.
\newblock Active imitation learning: Formal and practical reductions to iid
  learning.
\newblock \emph{Journal of Machine Learning Research}, 15\penalty0
  (120):\penalty0 4105--4143, 2014.

\bibitem[Kraft(1949)]{kraft1949device}
Leon~Gordon Kraft.
\newblock \emph{A device for quantizing, grouping, and coding
  amplitude-modulated pulses}.
\newblock PhD thesis, Massachusetts Institute of Technology, 1949.

\bibitem[Menda et~al.(2019)Menda, Driggs-Campbell, and
  Kochenderfer]{menda2019ensembledagger}
Kunal Menda, Katherine Driggs-Campbell, and Mykel~J Kochenderfer.
\newblock Ensembledagger: A bayesian approach to safe imitation learning.
\newblock In \emph{2019 IEEE/RSJ International Conference on Intelligent Robots
  and Systems (IROS)}, pages 5041--5048. IEEE, 2019.

\bibitem[Poland and Hutter(2005)]{Hutter:05mdl2px}
Jan Poland and Marcus Hutter.
\newblock Asymptotics of discrete {MDL} for online prediction.
\newblock \emph{IEEE Transactions on Information Theory}, 51\penalty0
  (11):\penalty0 3780--3795, 2005.
\newblock ISSN 0018-9448.
\newblock \doi{10.1109/TIT.2005.856956}.

\bibitem[Ross and Bagnell(2010)]{ross2010efficient}
St{\'e}phane Ross and Drew Bagnell.
\newblock Efficient reductions for imitation learning.
\newblock In \emph{Proceedings of the thirteenth international conference on
  artificial intelligence and statistics}, pages 661--668, 2010.

\bibitem[Ross et~al.(2011)Ross, Gordon, and Bagnell]{ross2011reduction}
St{\'e}phane Ross, Geoffrey Gordon, and Drew Bagnell.
\newblock A reduction of imitation learning and structured prediction to
  no-regret online learning.
\newblock In \emph{Proceedings of the fourteenth international conference on
  artificial intelligence and statistics}, pages 627--635, 2011.

\bibitem[Syed and Schapire(2010)]{syed2010reduction}
Umar Syed and Robert~E Schapire.
\newblock A reduction from apprenticeship learning to classification.
\newblock \emph{Advances in neural information processing systems},
  23:\penalty0 2253--2261, 2010.

\bibitem[Talbott(2016)]{talbottepistemologybayesian}
William Talbott.
\newblock {Bayesian Epistemology}.
\newblock In Edward~N. Zalta, editor, \emph{The {Stanford} Encyclopedia of
  Philosophy}. Metaphysics Research Lab, Stanford University, {W}inter 2016
  edition, 2016.

\bibitem[Zhang and Cho(2017)]{zhang2017query}
Jiakai Zhang and Kyunghyun Cho.
\newblock Query-efficient imitation learning for end-to-end simulated driving.
\newblock In \emph{Proceedings of the AAAI Conference on Artificial
  Intelligence}, volume~31, 2017.

\end{thebibliography}

\appendix

\section{Notation and Definitions} \label{sec:notationtable}
\noindent\begin{tabular}{|p{2cm}|p{\linewidth-29mm}|}
\hline
\textbf{Notation} & \textbf{Meaning} \\ \hline
\multicolumn{2}{|l|}{\textit{Preliminary Notation}} \\ \hline
$\A$, $\Ob$ & the finite action/observation spaces \\ \hline
$a_t$, $o_t$ & $\in \A, \Ob$; the action and observation at timestep $t$\\ \hline
$q_t$ & $\in \{0, 1\}$; indicates whether the demonstrator is queried at time $t$ \\ \hline
$\mathcal{H}$ & $\{0, 1\} \times \A \times \Ob$ \\ \hline
$h_{t}$ & $(q_t, a_t, o_t)$; the interaction history in the $t$\textsuperscript{th} timestep \\ \hline
$h_{<t}$ & $(h_1, ..., h_{t-1})$ \\ \hline
$h_t^{\setminus}$ & $(a_t, o_t)$ \\ \hline
$\epsilon$ & the empty history \\ \hline
$\pi$ & policy stochastically mapping $\mathcal{H}^* \rightsquigarrow \{0, 1\} \times \A$ \\ \hline
$\mu$ & environment stochastically mapping $\mathcal{H}^* \times \{0, 1\} \times \A \rightsquigarrow \mathcal{O}$ \\ \hline
$\p^\pi_\nu$ & a probability measure over histories with actions sampled from $\pi$ and observations sampled from $\nu$ \\ \hline
$\E^\pi_\nu$ & the expectation when the interaction history is sampled from $\p^\pi_\nu$ \\ \hline
$w(\pi)$ & (positive) prior weight that the policy $\pi$ is the demonstrator's \\ \hline
$w(\pi \mid h_{<t})$ & posterior weight on the policy $\pi$; $\propto w(\pi) \prod_{k < t : q_k = 1} \pi(q_k a_k \mid h_{<k})$ \\ \hline
\multicolumn{2}{|l|}{\textit{Imitation Learner Definition}} \\ \hline
$\alpha$ & $\in (0, 1]$; lower values mean the imitator better resembles the demonstrator, but queries longer \\ \hline
$\pa$ & set of top models; $\{\pi^{h_{<t}}_n \in \Pi : w(\pi^{h_{<t}}_n \mid h_{<t}) \geq \alpha \sum_{m \leq n} w(\pi^{h_{<t}}_m \mid h_{<t})\}$ \\ \hline
$\pi^d$ & the demonstrator's policy \\ \hline
$\pi^i_\alpha$ & the imitator's policy; $\pi^i_\alpha(0, a \mid h_{<t}) = \min_{\pi' \in \pa} \pi'(1, a \mid h_{<t})$, and $\pi^i_\alpha(1, a \mid h_{<t}) = \theta_q(h_{<t}) \pi^d(1, a \mid h_{<t})$ \\ \hline
$\theta_q(h_{<t})$ & the query probability; $1 - \sum_{a \in \A} \pi^i_\alpha(0, a \mid h_{<t})$ \\ \hline
$\hat{\pi}_\alpha$ & the imitator policy defined with respect to an arbitrary demonstrator $\pi$, not the real demonstrator $\pi^d$ \\ \hline
\multicolumn{2}{|l|}{\textit{General Sequence Prediction}} \\ \hline
$\mathcal{X}$ & finite alphabet \\ \hline
$x_{<t}$ & an element of $\mathcal{X}^t$ \\ \hline
$\M$ & countable set of measures over $\mathcal{X}^\infty$ \\ \hline
$w(\nu)$ & prior weight on $\nu \in \M$ \\ \hline
$w(\nu \mid x_{<t})$ & posterior weight on $\nu \in \M$ \\ \hline
$\xi$ & $\xi(x_{<t}) = \sum_{\nu \in \M} w(\nu) \nu(x_{<t})$ \\ \hline
$\rho_n$ & $\rho_n(x_{<t}) = \max_{\M' \subset \M : |\M'| = i} \sum_{\nu \in \M'} w(\nu) \nu(x_{<t})$ \\ \hline
$\rhonn$ & like $\rho_n$, but normalized to be a measure \newline $\rhonn(x \mid x_{<t}) = \rho_n(x \mid x_{<t}) /  \sum_{x' \in \mathcal{X}}\rho_n(x' \mid x_{<t})$ \\ \hline
$\M_n^{x_{<t}}$ & $\argmax_{\M' \subset \M : |\M'| = i} \sum_{\nu \in \M} w(\nu) \nu(x_{<t})$ \\ \hline
$\rhosn$ & a mixture over the top $i$ models, sorted by posterior weight \newline $\rhosn(x \mid x_{<t}) = \sum_{\nu \in \M_n^{x_{<t}}} w(\nu) \nu(x_{<t}x) / \sum_{\nu \in \M_n^{x_{<t}}} w(\nu) \nu(x_{<t})$ \\ \hline
$\phi^{x_{<t}}_n$ & $w(\nu_n^{x_{<t}} \mid x_{<t}) / w(\M_n^{x_{<t}} \mid x_{<t})$ \\ \hline
\end{tabular}

\section{Omitted Proofs} \label{sec:proofs}

\lemmurhonorm*
\begin{proof}
The KL divergence is non-negative, so we bound an arbitrary finite sum.
\begin{align*}
    &\E_\mu \sum_{t = 0}^{N-1} \KL\left(\mu(\cdot \mid x_{<t}) \va\va \rhonn(\cdot \mid x_{<t})\right)
    \\
    = &\sum_{t = 0}^{N-1} \E_\mu \sum_{x_t \in \mathcal{X}} \mu(x_t \mid x_{<t}) \log \frac{\mu(x_t \mid x_{<t})}{\rhonn(x_t \mid x_{<t})}
    \\
    \equal^{(a)} &\sum_{t = 0}^{N-1} \sum_{x_{<t} \in \mathcal{X}^{t}} \mu(x_{<t}) \sum_{x_t \in \mathcal{X}} \mu(x_t \mid x_{<t}) \left[ \log \frac{\mu(x_t \mid x_{<t})}{\rho_n(x_t \mid x_{<t})} + \log \frac{\sum_{x' \in \mathcal{X}}\rho_n(x_{<t}x')}{\rho_n(x_{<t})} \right]
    \\
    \lequal^{(b)} &w(\mu)^{-1} + \sum_{t = 0}^{N-1} \sum_{x_{<t} \in \mathcal{X}^{t}} \mu(x_{<t}) \sum_{x_t \in \mathcal{X}} \mu(x_t \mid x_{<t}) \log \frac{\mu(x_t \mid x_{<t})}{\rho_n(x_t \mid x_{<t})}
    \\
    = &w(\mu)^{-1} + \sum_{t = 0}^{N-1} \sum_{x_{<t} \in \mathcal{X}^{t}} \mu(x_{<t}) \sum_{x_t \in \mathcal{X}} \mu(x_t \mid x_{<t}) \left[\log \frac{\mu(x_{<t} x_t)}{\rho_n(x_{<t}x_t)} - \log \frac{\mu(x_{<t})}{\rho_n(x_{<t})}\right]
    \\
    = &w(\mu)^{-1} + \sum_{t = 0}^{N-1} \sum_{x_{<t} \in \mathcal{X}^{t}} \mu(x_{<t}) \left[\sum_{x_t \in \mathcal{X}} \mu(x_t \mid x_{<t}) \log \frac{\mu(x_{<t} x_t)}{\rho_n(x_{<t}x_t)} - \log \frac{\mu(x_{<t})}{\rho_n(x_{<t})}\right]
    \\
    = &w(\mu)^{-1} + \sum_{t = 0}^{N-1} \left[ \sum_{x_{\leq t} \in \mathcal{X}^{t+1}} \mu(x_{\leq t}) \log \frac{\mu(x_{\leq t})}{\rho_n(x_{\leq t})} - \sum_{x_{<t} \in \mathcal{X}^{t}} \mu(x_{<t}) \log \frac{\mu(x_{<t})}{\rho_n(x_{<t})} \right]
    \\
    \equal^{(c)} &w(\mu)^{-1} + \sum_{x_{< N} \in \mathcal{X}^N} \mu(x_{< N}) \log \frac{\mu(x_{< N})}{\rho_n(x_{< N})} - \mu(\epsilon) \log \frac{\mu(\epsilon)}{\rho_n(\epsilon)}
    \\
    \lequal^{(d)} &w(\mu)^{-1} + \sum_{x_{< N} \in \mathcal{X}^N} \mu(x_{< N}) \log w(\mu)^{-1} = w(\mu)^{-1} + \log w(\mu)^{-1}
    \tagaligneq
\end{align*}
where $(a)$ follows from the definition of $\rhonn$ in Equation \ref{def:rhoni}, $(b)$ follows from Lemma \ref{lem:rhonorm} and the fact that $\log x \leq x - 1$, $(c)$ cancels like terms, and $(d)$ follows from Inequality \ref{ineq:rhoimu}.
\end{proof}

\thmrhosiconv*

\begin{proof}
We abbreviate $w(\mu)^{-1}$ as $c$. Let $[N] := (0, ..., N-1)$. We define an $N|\mathcal{X}|$-dimensional random vector depending on the infinite sequence $x_{<\infty}$:
\begin{equation}
    \overrightarrow{\nu_1\!\!:\!\!\nu_2}^N := \left( \nu_1(x \mid x_{<t}) - \nu_2(x\mid x_{<t}) \right)_{t \in [N], x \in \mathcal{X}}
\end{equation}

In this notation, we aim to show $\E_\mu ||\overrightarrow{\rhosn\!\!:\!\!\mu}^N||^2_2 \leq 6c + 3$. Lemma \ref{lem:rhoi} (i) and (ii) become
\begin{align}
    &\E_\mu ||\overrightarrow{\rho_n\!\!:\!\!\rhosn}^N||_1 \leq c
    \\
    &\E_\mu ||\overrightarrow{\rho_n\!\!:\!\!\rhonn}^N||_1 \leq c
\end{align}
Therefore,
\begin{equation}
    \E_\mu ||\overrightarrow{\rhosn\!\!:\!\!\rhonn}^N||_1 \leq 2c
\end{equation}

Since each element in this vector is in $[-1, 1]$, squaring them makes the magnitude no larger, so
\begin{equation}
    \E_\mu ||\overrightarrow{\rhosn\!\!:\!\!\rhonn}^N||^2_2 \leq 2c
\end{equation}

The KL divergence is larger than the sum of the squares of the probability differences (proven, for example, in \cite[\S 3.9.2]{Hutter:04uaibook}), so Lemma \ref{lem:murhonorm} implies
\begin{equation}
    \E_\mu ||\overrightarrow{\rhonn\!\!:\!\!\mu}^N||^2_2 \leq c + \log c
\end{equation}
By the triangle inequality,
\begin{equation}
    ||\overrightarrow{\rhosn\!\!:\!\!\mu}^N||_2 \leq ||\overrightarrow{\rhosn\!\!:\!\!\rhonn}^N||_2 + ||\overrightarrow{\rhonn\!\!:\!\!\mu}^N||_2
\end{equation}
so
\begin{equation}
    ||\overrightarrow{\rhosn\!\!:\!\!\mu}^N||^2_2 \leq
    ||\overrightarrow{\rhosn\!\!:\!\!\rhonn}^N||^2_2 + ||\overrightarrow{\rhonn\!\!:\!\!\mu}^N||^2_2 +
    2 ||\overrightarrow{\rhosn\!\!:\!\!\rhonn}^N||_2 ||\overrightarrow{\rhonn\!\!:\!\!\mu}^N||_2
\end{equation}
and because $\E[XY] \leq \sqrt{\E [X^2]\E [Y^2]}$ (the Cauchy--Schwarz Inequality),
\begin{equation}
    \E_\mu ||\overrightarrow{\rhosn\!\!:\!\!\mu}^N||^2_2 \leq 2c + (c + \log c) + 2\sqrt{2c(c+\log c)} < 6c + 3
\end{equation}
\end{proof}

We name the measure with the $i$\textsuperscript{th} largest posterior weight
\begin{equation}
    \nu^{x_{<t}}_n :\in \M^{x_{<t}}_n \setminus \M^{x_{<t}}_{i-1}
\end{equation}
with the posterior weight formally defined $w(\nu \mid x_{<t}) := \frac{w(\nu)\nu(x_{<t})}{\xi(x_{<t})}$,
and $w(\M' \mid x_{<t}) := \sum_{\nu \in \M'} w(\nu \mid x_{<t})$. Now, we let 
\begin{equation}
    \phi^{x_{<t}}_n := \frac{w(\nu_n^{x_{<t}} \mid x_{<t})}{w(\M_n^{x_{<t}} \mid x_{<t})}
\end{equation}

\thmprederror*
\begin{proof}
$\rhosn(x\mid x_{<t})$ is a weighted average of $\nu_j^{x_{<t}}(x\mid x_{<t})$ for $j \leq i$:
\begin{align*}
    \rhosn(x\mid x_{<t}) &= \frac{\sum_{\nu \in \M_n^{x_{<t}}} w(\nu) \nu(x_{<t}x) }{ \sum_{\nu \in \M_n^{x_{<t}}} w(\nu) \nu(x_{<t})}
    \\
    &= \frac{\sum_{\nu \in \M_n^{x_{<t}}} w(\nu) \nu(x_{<t})\nu(x \mid x_{<t}) }{ \sum_{\nu \in \M_n^{x_{<t}}} w(\nu) \nu(x_{<t})}
    \\
    &= \frac{\sum_{\nu \in \M_n^{x_{<t}}} w(\nu \mid x_{<t}) \xi(x_{<t})\nu(x \mid x_{<t}) }{ \sum_{\nu \in \M_n^{x_{<t}}}  w(\nu \mid x_{<t}) \xi(x_{<t})}
    \\
    &= \sum_{\nu \in \M_n^{x_{<t}}} \frac{w(\nu \mid x_{<t}) }{w(\M_n^{x_{<t}} \mid x_{<t})} \nu(x \mid x_{<t})
    \\
    &= \sum_{j = 1}^i \frac{w(\nu_j^{x_{<t}} \mid x_{<t}) }{ w(\M_n^{x_{<t}} \mid x_{<t})} \nu_j^{x_{<t}}(x\mid x_{<t})
    \tagaligneq
\end{align*}
Trivially,
\begin{equation}
    \nu^{x_{<t}}_1(x\mid x_{<t}) = \rhoso(x \mid x_{<t})
\end{equation}
but for $i > 1$, we would like to express $\nu^{x_{<t}}_n$ in terms of $\rhosn$ and $\rhosnm$:
\begin{equation}
    \rhosn(x \mid x_{<t}) = \frac{w(\M_{i-1}^{x_{<t}} \mid x_{<t})}{w(\M_n^{x_{<t}} \mid x_{<t})} \rhosnm(x \mid x_{<t}) + \frac{w(\nu_n^{x_{<t}} \mid x_{<t})}{w(\M_n^{x_{<t}} \mid x_{<t})} \nu^{x_{<t}}_n(x \mid x_{<t})
\end{equation}
Thus,
\begin{equation} \label{eqn:nui}
    \nu^{x_{<t}}_n(x\mid x_{<t}) = \frac{w(\M_n^{x_{<t}} \mid x_{<t})}{w(\nu_n^{x_{<t}} \mid x_{<t})}\rhosn(x \mid x_{<t}) - \frac{w(\M_{i-1}^{x_{<t}} \mid x_{<t})}{w(\nu_n^{x_{<t}} \mid x_{<t})}\rhosnm(x \mid x_{<t})
\end{equation}
Since $\frac{w(\M_n^{x_{<t}} \mid x_{<t})}{w(\nu_n^{x_{<t}} \mid x_{<t})} -  \frac{w(\M_{i-1}^{x_{<t}} \mid x_{<t})}{w(\nu_n^{x_{<t}} \mid x_{<t})} = 1$, for $i > 1$,
\begin{multline}
    \nu^{x_{<t}}_n(x\mid x_{<t}) - \mu(x\mid x_{<t}) = \frac{w(\M_n^{x_{<t}} \mid x_{<t})}{w(\nu_n^{x_{<t}} \mid x_{<t})}\left[\rhosn(x \mid x_{<t}) - \mu(x\mid x_{<t})\right]-
    \\
    \frac{w(\M_{i-1}^{x_{<t}} \mid x_{<t})}{w(\nu_n^{x_{<t}} \mid x_{<t})}\left[ \rhosnm(x \mid x_{<t}) - \mu(x\mid x_{<t}) \right]
\end{multline}
Recall
\begin{equation*}
    \phi^{x_{<t}}_n := \frac{w(\nu_n^{x_{<t}} \mid x_{<t})}{w(\M_n^{x_{<t}} \mid x_{<t})}
\end{equation*}
Since $w(\M_{i-1}^{x_{<t}} \mid x_{<t}) \leq w(\M_n^{x_{<t}} \mid x_{<t})$, we have
\begin{multline}
    (\phi^{x_{<t}}_n)^2 \left[\nu^{x_{<t}}_n(x\mid x_{<t}) - \mu(x\mid x_{<t})\right]^2 \leq 2 \left[\rhosn(x \mid x_{<t}) - \mu(x\mid x_{<t})\right]^2 +
    \\
    2 \left[ \rhosnm(x \mid x_{<t}) - \mu(x\mid x_{<t}) \right]^2
\end{multline}

Now we consider all measures $\nu^{x_{<t}}_n$ for which $\phi^{x_{<t}}_n > \alpha$.
\begin{multline}
    \E_\mu \sum_{t=0}^{N-1} \sum_{i : \phi^{x_{<t}}_n > \alpha} \sum_{x \in \mathcal{X}} \left[\nu^{x_{<t}}_n(x\mid x_{<t}) - \mu(x\mid x_{<t})\right]^2 \leq 2 \alpha^{-2} \E_\mu \sum_{t=0}^{N-1} \sum_{i : \phi^{x_{<t}}_n > \alpha} \sum_{x \in \mathcal{X}}
    \\
    \left[\rhosn(x \mid x_{<t}) - \mu(x\mid x_{<t})\right]^2 +
    \left[ \rhosnm(x \mid x_{<t}) - \mu(x\mid x_{<t}) \right]^2
\end{multline}
Now we note that $\{n : \phi^{x_{<t}}_n > \alpha \} \subset \{n : n < \alpha^{-1}\}$, since $w(\nu^{x_{<t}}_n \mid x_{<t}) \leq w(\nu^{x_{<t}}_j \mid x_{<t})$ for $i > j$. Thus,
\begin{align*}
  &\E_\mu \sum_{t=0}^{N-1} \sum_{n : \phi^{x_{<t}}_n > \alpha} \sum_{x \in \mathcal{X}} \left[\nu^{x_{<t}}_n(x\mid x_{<t}) - \mu(x\mid x_{<t})\right]^2
    \\
    \leq &2 \alpha^{-2} \E_\mu \sum_{t=0}^{N-1} \sum_{i : i < \alpha^{-1}} \sum_{x \in \mathcal{X}} \left[\rhosn(x \mid x_{<t}) - \mu(x\mid x_{<t})\right]^2 +
    \left[ \rhosnm(x \mid x_{<t}) - \mu(x\mid x_{<t}) \right]^2
    \\
    = &2 \alpha^{-2} \sum_{i : i < \alpha^{-1}} \E_\mu \sum_{t=0}^{N-1} \sum_{x \in \mathcal{X}} \left[\rhosn(x \mid x_{<t}) - \mu(x\mid x_{<t})\right]^2 +
    \left[ \rhosnm(x \mid x_{<t}) - \mu(x\mid x_{<t}) \right]^2
    \\
    \leq &2 \alpha^{-2} \sum_{i : i < \alpha^{-1}} 2(6 w(\mu)^{-1} + 3) \leq \alpha^{-3}(24 w(\mu)^{-1} + 12)
    \tagaligneq
\end{align*}

Considering only a subset of these conditional-probability-errors,
\begin{multline}
    \E_\mu \sum_{t = 0}^{N-1} \sum_{x \in \mathcal{X}} \left[\mu(x \mid x_{<t}) - \min_{n : \phi^{x_{<t}}_n > \alpha} \nu_n^{x_{<t}}(x \mid x_{<t})\right]^2 \leq \\
    \E_\mu \sum_{t=0}^{N-1} \sum_{n : \phi^{x_{<t}}_n > \alpha} \sum_{x \in \mathcal{X}} \left[\nu^{x_{<t}}_n(x\mid x_{<t}) - \mu(x\mid x_{<t})\right]^2 \leq \alpha^{-3}(24 w(\mu)^{-1} + 12)
\end{multline}

This completes the proof of (i). Finally, with $\mathbb{U}$ being the uniform distribution,
\begin{align*}
    &\E_\mu \sum_{t = 0}^{N-1} \left[1 - \sum_{x \in \mathcal{X}} \min_{n: \phi^{x_{<t}}_n > \alpha} \nu_n^{x_{<t}}(x \mid x_{<t})\right]^2
    \\
    = &\E_\mu \sum_{t = 0}^{N-1} \left[\sum_{x \in \mathcal{X}} \mu(x \mid x_{<t}) - \min_{n: \phi^{x_{<t}}_n > \alpha} \nu_n^{x_{<t}}(x \mid x_{<t})\right]^2
    \\
    = &\E_\mu \sum_{t = 0}^{N-1} \left[|\mathcal{X}| \E_{x \sim \mathbb{U}(\mathcal{X})} \mu(x \mid x_{<t}) - \min_{n: \phi^{x_{<t}}_n > \alpha} \nu_n^{x_{<t}}(x \mid x_{<t})\right]^2
    \\
    \lequal^{(a)} &|\mathcal{X}|^2 \E_\mu \sum_{t = 0}^{N-1} \E_{x \sim \mathbb{U}(\mathcal{X})} \left[\mu(x \mid x_{<t}) - \min_{n: \phi^{x_{<t}}_n > \alpha} \nu_n^{x_{<t}}(x \mid x_{<t})\right]^2
    \\
    = &|\mathcal{X}| \E_\mu \sum_{t = 0}^{N-1} \sum_{x \in \mathcal{X}} \left[\mu(x \mid x_{<t}) - \min_{n: \phi^{x_{<t}}_n > \alpha} \nu_n^{x_{<t}}(x \mid x_{<t})\right]^2
    \\
    \lequal^{(b)} & |\mathcal{X}| \alpha^{-3}(24 w(\mu)^{-1} + 12)
    \tagaligneq
\end{align*}
where $(a)$ follows from Jensen's Inequality, and $(b)$ follows from Theorem \ref{thm:prederror} (i), which completes the proof of (ii).
\end{proof}

\thmtruthintop*
\begin{proof}
Since, $w(\pi^d \mid h_{<t}) > \alpha \implies \pi^d \in \pa$, we show $\ptrue(\forall t: w(\pi^d \mid h_{<t}) > \alpha) \geq 1 - \alpha w(\pi^d)^{-1}$. First we show that $z_t = w(\pi^d \mid h_{<t})^{-1}$ is a non-negative $\ptrue$-supermartingale.

First, suppose $q_{t+1} = 0$. In this case, $z_{t+1} = z_{t}$, because the posterior weight is only updated when the demonstrator picks an action. Now suppose $q_{t+1} = 1$.
\begin{align*}
    \Etrue[z_{t+1} \mid h_{<t}1] 
    &\equal^{(a)} \sum_{a_t \in \A : \pi^d(a_t \mid h_{<t}) > 0}\pi^d(a_t \mid h_{<t}) w(\pi^d \mid h_{<t}1a_t)^{-1}
    \\
    &\equal^{(b)} \sum_{a_t \in \A : \pi^d(a_t \mid h_{<t}) > 0} \pi^d(a_t \mid h_{<t}) \frac{\sum_{\pi \in \Pi} w(\pi \mid h_{<t})\pi(a_t \mid h_{<t}) }{w(\pi^d \mid h_{<t})\pi^d(a_t \mid h_{<t})}
    \\
    &\lequal^{(c)} \sum_{a_t \in \A} \frac{\sum_{\pi \in \Pi} w(\pi \mid h_{<t})\pi(a_t \mid h_{<t}) }{w(\pi^d \mid h_{<t})}
    \\
    &= z_t \sum_{\pi \in \Pi} w(\pi \mid h_{<t}) \sum_{a_t \in \A} \pi(a_t \mid h_{<t}) = z_t
\end{align*}
where $(a)$ follows because $a_t \sim \pi^d$ when $q_t = 1$, $(b)$ follows from Bayes' rule---the formula for posterior updating, and $(c)$ follows from cancelling, and adding non-negative terms to the sum.

Since $w(\pi^d \mid h_{<t})^{-1}$ is a non-negative supermartingale, by the supermartingale convergence theorem \citep[Thm. 5.4.2]{durrett2010probability},
\begin{equation}
    \ptrue(\exists t: w(\pi^d \mid h_{<t})^{-1} \geq \alpha^{-1}) \leq \alpha w(\pi^d)^{-1}
\end{equation}
so
\begin{equation}
    \ptrue(\forall t: w(\pi^d \mid h_{<t}) > \alpha) \geq 1 - \alpha w(\pi^d)^{-1}
\end{equation}
which implies
\begin{equation} \label{eqn:posteriortruth}
    \ptrue(\forall t: \pi^d \in \pa) \geq 1 - \alpha w(\pi^d)^{-1}
\end{equation}
\end{proof}

\thmimtodem*
\begin{proof}
Recall $\pi^i_\alpha(0, a \mid h_{<t}) = \min_{\pi \in \pa} \pi(1, a \mid h_{<t})$, so if $\pi^d \in \pa$, then $\pi^i_\alpha(0, a \mid h_{<t}) \leq \pi^d(1, a \mid h_{<t})$. Thus, in that case,
\begin{multline}
    \sum_{a \in \A} \va \pi^i_\alpha(0, a \mid h_{<t}) - \pi^d(1, a \mid h_{<t}) \va = \sum_{a \in \A} \pi^d(1, a \mid h_{<t}) - \pi^i_\alpha(0, a \mid h_{<t}) \leq \\
    1 - \sum_{a \in \A} \pi^i_\alpha(0, a \mid h_{<t}) = \theta_q(h_{<t})
\end{multline}
The rest follows easily:
\begin{align*}
    &\Etrue \left[\sum_{t=0}^\infty \left(\sum_{a \in \A} \va \pi^i_\alpha(0, a \mid h_{<t}) - \pi^d(1, a \mid h_{<t}) \va \right)^3 \vd \forall t : \pi^d \in \pa \right]
    \\
    \leq &\Etrue \left[\sum_{t=0}^\infty \theta_q(h_{<t})^3 \vd \forall t : \pi^d \in \pa \right]
    \\
    \lequal^{(a)} &\Etrue \left[\sum_{t=0}^\infty \theta_q(h_{<t})^3 \right] \biggm/ \ptrue(\forall t : \pi^d \in \pa)
    \\
    \lequal^{(b)} &\frac{|\mathcal{A}| \alpha^{-3}(24  w(\pi^d)^{-1} + 12)}{1 - \alpha w(\pi^d)^{-1}}
    \tagaligneq
\end{align*}
where $(a)$ follows because $\theta_q$ is non-negative, and $(b)$ follows from Equation \ref{eqn:posteriortruth} and Theorem \ref{thm:queryprob} (as long as $\alpha < w(\pi^d)$).
\end{proof}

\begin{lemma} \label{lem:deltalesstheta}
For $a \in \A$, let $0 \leq i_a \leq d_a$, and let $\sum_{a \in \A} d_a = 1$. Let $\theta_q = 1 - \sum_{a \in \A} i_a$. Then,
\begin{equation*}
    \Delta := \sum_{a \in \A} (i_a + \theta_q d_a) \log \frac{i_a + \theta_q d_a}{d_a} \leq \theta_q
\end{equation*}
\end{lemma}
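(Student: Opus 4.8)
The plan is to recognize $\Delta$ as a KL divergence and reduce the entire lemma to one elementary pointwise bound. Write $P(a) := i_a + \theta_q d_a$; this is exactly the imitator's marginal probability of playing action $a$ (its own probability $i_a$ plus the query probability $\theta_q$ times the demonstrator's probability $d_a$), and with $D(a) := d_a$ the quantity $\Delta$ is precisely $\KL(P \,\|\, D)$. First I would check that $P$ is a genuine probability distribution: summing, $\sum_{a} P(a) = \sum_a i_a + \theta_q \sum_a d_a = \sum_a i_a + \theta_q = 1$, where the last equality is just the definition $\theta_q = 1 - \sum_a i_a$. This both licenses reading $\Delta$ as a KL divergence and supplies the identity $\sum_a P(a) = 1$ that the final step needs.

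The crux is a uniform bound on the likelihood ratio $P(a)/D(a)$. Because $0 \le i_a \le d_a$, we have $P(a) = i_a + \theta_q d_a \le d_a + \theta_q d_a = (1 + \theta_q) d_a$, so $P(a)/d_a \le 1 + \theta_q$ for every $a$ with $d_a > 0$. (When $d_a = 0$ the hypothesis $i_a \le d_a$ forces $i_a = 0$, hence $P(a) = 0$, and that term contributes nothing under the standard convention $0\log 0 = 0$.) This is where the conservatism assumption $i_a \le d_a$ does all the work: it caps each per-action ratio at $1 + \theta_q$, uniformly in $a$.

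To finish, apply $\log(1+x) \le x$ to get $\log\frac{P(a)}{d_a} \le \log(1 + \theta_q) \le \theta_q$ for each $a$. Then multiply through by the nonnegative weight $P(a)$ and sum:
\begin{equation*}
    \Delta = \sum_{a \in \A} P(a) \log \frac{P(a)}{d_a} \le \sum_{a \in \A} P(a)\, \theta_q = \theta_q \sum_{a \in \A} P(a) = \theta_q.
\end{equation*}

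The main thing to be careful about---the only place the argument could look fishy---is that some of the logarithms $\log(P(a)/d_a)$ are negative (whenever the imitator under-weights an action relative to the demonstrator), so one cannot argue term-by-term that each summand is small. The resolution is that we only ever bound from above: the inequality $\log(P(a)/d_a) \le \theta_q$ multiplied by the nonnegative factor $P(a)$ is preserved even when the left side is negative, so summing against the probability weights $P(a)$ (which total $1$) yields the clean bound without needing positivity of any individual log-term. I expect no genuine obstacle beyond this bookkeeping; the work is entirely in spotting that $i_a \le d_a$ gives the pointwise ratio bound $1+\theta_q$ and that $\log(1+\theta_q)\le\theta_q$.
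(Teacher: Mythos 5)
Your proposal is correct and follows essentially the same route as the paper's proof: both bound the likelihood ratio $(i_a + \theta_q d_a)/d_a \leq 1 + \theta_q$ uniformly using $i_a \leq d_a$, sum the nonnegative weights $i_a + \theta_q d_a$ to $1$, and finish with $\log(1+\theta_q) \leq \theta_q$. Your extra remarks (verifying the weights form a probability distribution, handling $d_a = 0$, and noting that negative log-terms pose no obstacle since one only bounds from above) are sound bookkeeping that the paper leaves implicit.
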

\begin{proof}
\begin{multline}
    \sum_{a \in \A} (i_a + \theta_q d_a) \log \frac{i_a + \theta_q d_a}{d_a} = \sum_{a \in \A} (i_a + \theta_q d_a) \log (\frac{i_a}{d_a} + \theta_q) \leq \\
    \left( \sum_{a \in \A} i_a + \theta_q \sum_{a \in \A} d_a \right) \log (1 + \theta_q) = (1 - \theta_q + \theta_q) \log (1 + \theta_q) \leq \theta_q
\end{multline}
\end{proof}

For the remaining proofs, we sometimes consider the restriction of probability measures over $\mathcal{H}^\infty$ to $(\A \times \Ob)^\infty$; that is, we marginalize over the query record. For a history $h_{<t} = q_0a_0 o_0 ... q_{t-1}a_{t-1} o_{t-1}$, let $h^{\setminus}_{<t}$ denote $a_0 o_0 ... a_{t-1} o_{t-1}$. We define the $t$-step KL divergence as follows:
\begin{equation}
    \KL_t(P \mid \mid Q) := \sum_{h^{\setminus}_{<t} \in (\A \times \Ob)^t} P(h^{\setminus}_{<t}) \log \frac{P(h^{\setminus}_{<t})}{Q(h^{\setminus}_{<t})}
\end{equation}

\thmkl*
\begin{proof}
We begin by restricting attention to a particular timestep $t$. Recall $\pi^i_\alpha(0, a \mid h_{<t}) = \min_{\pi' \in \pa} \pi'(1, a \mid h_{<t})$. We abbreviate this quantity $i_a$. We also let $d_a$ denote $\pi^d(1, a \mid h_{<t})$. Note that when $\pi^d \in \pa$,
\begin{equation}
    i_a \leq d_a
\end{equation}
Recall that the query probability $\theta_q = 1 - \sum_{a \in \A} i_a$, and the marginalized probability $\pi^i_\alpha(a \mid h_{<t}) = i_a + \theta_q d_a$. Assuming $h_{<k}$ satisfies $E$, let
\begin{equation}\label{eqn:deltadef}
    \Delta_k := \KL_1 \left(\pi^i_\alpha(\cdot \mid h_{<k}) \va \va \pi^d(\cdot \mid h_{<k}) \right) = \sum_{a \in \A} (i_a + \theta_q d_a) \log \frac{i_a + \theta_q d_a}{d_a}
\end{equation}
By Lemma \ref{lem:deltalesstheta}, $\Delta_k \leq \theta_q$.

Now, we write the $t$-step KL divergence $\KL_t$ as a sum of the expectation of 1-step KL divergences. We'll abbreviate a measure $\p(\cdot \mid E)$ as ${}^E\!\p$.

\begin{align*} \label{ineq:kltotheta}
    \KL_t \left({}^E\!\ptrue \vb \vb {}^E\!\p^{\pi^d}_{\mu} \right) &= \E_{h_{<t} \sim {}^E\!\ptrue} \log \frac{{}^E\!\ptrue(h^{\setminus}_{<t})}{{}^E\!\p^{\pi^d}_{\mu}(h^{\setminus}_{<t})}
    \\
    &\lequal^{(a)} \E_{h_{<t} \sim {}^E\!\ptrue} \log \frac{\ptrue(h^{\setminus}_{<t})/\ptrue(E)}{\p^{\pi^d}_{\mu}(h^{\setminus}_{<t}) / \p^{\pi^d}_{\mu}(E)}
    \\
    &= \E_{h_{<t} \sim {}^E\!\ptrue} \log \frac{\ptrue(h^{\setminus}_{<t})}{\p^{\pi^d}_{\mu}(h^{\setminus}_{<t})} + \log \frac{\p^{\pi^d}_{\mu}(E)}{\ptrue(E)}
    \\
    &\leq \E_{h_{<t} \sim {}^E\!\ptrue} \log \frac{\ptrue(h^{\setminus}_{<t})}{\p^{\pi^d}_{\mu}(h^{\setminus}_{<t})} - \log \ptrue(E)
    \\
    &=: \E_{h_{<t} \sim {}^E\!\ptrue} \log \frac{\ptrue(h^{\setminus}_{<t})}{\p^{\pi^d}_{\mu}(h^{\setminus}_{<t})} + C_\alpha
    \\
    &\equal^{(b)} C_\alpha + \E_{h_{<t} \sim {}^E\!\ptrue} \sum_{k=0}^{t-1} \log \frac{\ptrue(h^{\setminus}_{k} \mid h_{<k})}{\p^{\pi^d}_{\mu}(h^{\setminus}_k \mid h_{<k})}
    \\
    &= C_\alpha + \sum_{k=0}^{t-1} \E_{h_{<k} \sim {}^E\!\ptrue} \E_{h_k \sim {}^E\!\ptrue(\cdot \mid h_{<k})} \log \frac{\ptrue(h^{\setminus}_{k} \mid h_{<k})}{\p^{\pi^d}_{\mu}(h^{\setminus}_k \mid h_{<k})}
    \\
    &= C_\alpha + \sum_{k=0}^{t-1} \E_{h_{<k} \sim {}^E\!\ptrue} \sum_{h^{\setminus}_k \in \A \times \Ob} {}^E\!\ptrue(h^{\setminus}_k \mid h_{<k}) \log \frac{\ptrue(h^{\setminus}_{k} \mid h_{<k})}{\p^{\pi^d}_{\mu}(h^{\setminus}_k \mid h_{<k})}
    \\
    &\leq C_\alpha + \sum_{k=0}^{t-1} \E_{h_{<k} \sim {}^E\!\ptrue} \sum_{h^{\setminus}_k \in \A \times \Ob} \frac{\ptrue(h^{\setminus}_k \mid h_{<k})}{\ptrue(E)} \log \frac{\ptrue(h^{\setminus}_{k} \mid h_{<k})}{\p^{\pi^d}_{\mu}(h^{\setminus}_k \mid h_{<k})}
    \\
    &= C_\alpha + \sum_{k=0}^{t-1} \E_{h_{<k} \sim {}^E\!\ptrue} \frac{1}{\ptrue(E)} \KL_1 \left(\ptrue(\cdot \mid h_{<k}) \vb \vb  \p^{\pi^d}_{\mu}(\cdot \mid h_{<k})\right)
    \\
    &= C_\alpha + \frac{1}{\ptrue(E)} \sum_{k=0}^{t-1} \E_{h_{<k} \sim {}^E\!\ptrue} \KL_1 \left(\pi^i_\alpha(\cdot \mid h_{<k}) \va\va  \pi^d(\cdot \mid h_{<k})\right)
    \\
    &\lequal^{(c)} -\log \ptrue(E) + \frac{1}{\ptrue(E)} {}^E\!\Etrue \sum_{k=0}^{t-1} \theta_q(h_{<k})
    \\
    &\leq -\log \ptrue(E) + \frac{1}{\ptrue(E)^2} \Etrue \sum_{k=0}^{t-1} \theta_q(h_{<k})
    \tagaligneq
\end{align*}
where $(a)$ follows from $h_{<t}$ satisfying $E$ with ${}^E\!\ptrue$-prob. 1, $(b)$ follows because $\mu$ and $\pi^d$ are fair, and $(c)$ follows from Equation \ref{eqn:deltadef} and Lemma \ref{lem:deltalesstheta}.

Finally,
\begin{align*} \label{ineq:jensencube}
    \Etrue \sum_{k=0}^{t-1} \theta_q(h_{<k}) &= t \E_{k \sim \mathbb{U}([t])} \Etrue \theta_q(h_{<k})
    \\
    &= t \left( \left( \E_{k \sim \mathbb{U}([t])} \Etrue \theta_q(h_{<k}) \right)^{3} \right)^{1/3}
    \\
    &\lequal^{(a)} t \left(\E_{k \sim \mathbb{U}([t])} \Etrue \theta_q(h_{<k})^3 \right)^{1/3}
    \\
    &= t \left(\frac{1}{t} \sum_{k=0}^{t-1} \Etrue \theta_q(h_{<k})^3 \right)^{1/3}
    \\
    &\lequal^{(b)} t^{2/3} |\mathcal{A}|^{1/3} \alpha^{-1}(24  w(\pi^d)^{-1} + 12)^{1/3}
    \tagaligneq
\end{align*}
where $(a)$ follows from Jensen's Inequality, and $(b)$ follows from Theorem \ref{thm:queryprob}. Combining this with Inequality \ref{ineq:kltotheta}, and recalling $\ptrue(E) \geq 1-\alpha/w(\pi^d)$, we have
\begin{equation}
    \KL_t \left({}^E\!\ptrue \vb \vb {}^E\!\p^{\pi^d}_{\mu} \right) \leq \frac{\alpha^{-1} |\mathcal{A}|^{1/3} (24  w(\pi^d)^{-1} + 12)^{1/3}}{(1-\alpha/w(\pi^d))^2} t^{2/3} - \log(1-\alpha/w(\pi^d))
\end{equation}
\end{proof}

\end{document}